\PassOptionsToPackage{table,svgnames}{xcolor}
\documentclass[11pt,letterpaper]{mystyle}
\usepackage[comma,authoryear,compress]{natbib}
\hypersetup{colorlinks=true,citecolor=YaleBlue,
  pdftitle={Beyond One-Step Accuracy: State-Affine Latent Transition for Reliable Visual Planning},
  pdfauthor={Boyuan Zhang, Yingjun Du, Xiantong Zhen, Ling Shao}}
\titlespacing*{\paragraph}{0pt}{1.5ex plus 0.5ex minus 0.3ex}{0.6em}
\usepackage{amsmath,amsfonts,bm}

\def\eqref#1{equation~\ref{#1}}
\def\Eqref#1{Equation~\ref{#1}}

\def\1{\bm{1}}

\DeclareMathAlphabet{\mathsfit}{\encodingdefault}{\sfdefault}{m}{sl}
\SetMathAlphabet{\mathsfit}{bold}{\encodingdefault}{\sfdefault}{bx}{n}

\usepackage{hyperref}
\usepackage{url}

\usepackage{graphicx}

\usepackage{booktabs}
\usepackage{multirow}
\usepackage{xcolor}
\usepackage{arydshln}
\newlength\savedwidth
\newcommand\thickhline{\noalign{\global\savedwidth\arrayrulewidth
  \global\arrayrulewidth 1pt}\hline\noalign{\global\arrayrulewidth\savedwidth}}
\usepackage{tabularx}
\newcolumntype{Y}{>{\centering\arraybackslash}X}
\usepackage{wrapfig}
\usepackage{placeins}

\usepackage{amsmath}
\usepackage{amssymb}
\usepackage{mathtools}
\usepackage{amsthm}

\usepackage{xcolor}

\usepackage[capitalise]{cleveref}
\usepackage{subcaption}
\usepackage[normalem]{ulem}
\usepackage[misc]{ifsym}

 \usepackage{listings}
 \usepackage{algorithm}      
 \definecolor{kwcolor}{RGB}{110,60,140}
 \definecolor{fncolor}{RGB}{40,90,150}
 \definecolor{cmcolor}{RGB}{110,120,130}
 \definecolor{doccolor}{RGB}{190,90,30}

\lstdefinestyle{salt}{
  language=Python, basicstyle=\ttfamily\small, columns=fullflexible,
  keepspaces=true, showstringspaces=false,
  keywordstyle=\color{kwcolor}\bfseries,
  commentstyle=\color{cmcolor}\itshape,
  stringstyle=\color{doccolor},
  emph={encoder,action_encoder,affine_step,sigreg,mse,expm,tanh,diag,
        einsum,stack},
  emphstyle=\color{fncolor},
  morekeywords={def,for,in,range,return},
  backgroundcolor=\color{gray!6},
  frame=none,                 
  linewidth=\linewidth,       
  xleftmargin=0pt, xrightmargin=0pt,
  framexleftmargin=0pt,
  aboveskip=0.4em, belowskip=0.2em}

\newcommand{\ours}{\textsc{SALT}}

\newtheorem{proposition}{Proposition}

\graphicspath{{fig/}}

\title{Beyond One-Step Accuracy: State-Affine Latent Transition for Reliable Visual Planning}
\runningtitle{Beyond One-Step Accuracy: State-Affine Latent Transition for Reliable Visual Planning}
\author{%
  \textbf{Boyuan Zhang}$^{1}$ \quad
  \textbf{Yingjun Du}$^{2}$ \quad
  \textbf{Xiantong Zhen}$^{3}$ \quad
  \textbf{Ling Shao}$^{1,(\textrm{\Letter})}$ \\
  $^{1}$UCAS-Terminus AI Lab, School of Engineering Science, \\
  University of Chinese Academy of Sciences \\
  $^{2}$VIS Lab, University of Amsterdam \\
  $^{3}$Central Research Institute, United Imaging Healthcare, Co., Ltd.\\
}
\correspondingauthor{Ling Shao ($^{(\textrm{\Letter})}$), \href{mailto:ling.shao@ieee.org}{\texttt{ling.shao@ieee.org}}}

\date{September 27, 2026}

\newcommand{\metadataicon}[1]{\makebox[1.5em][l]{\raisebox{-0.15em}{\includegraphics[height=1.1em]{icons/#1.pdf}}}}

\begin{document}
\begin{abstract}
Joint-embedding world models enable visual planning by learning
action-conditioned dynamics in latent space. 
Yet they are commonly trained for one-step prediction on encoded states,
while planning recursively applies the learned transition to its own predictions.
One-step accuracy therefore does not capture how prediction errors propagate
under recursive rollout.
We decompose multi-step rollout error into the errors introduced at individual steps and their propagation through subsequent transitions.
We show that state-affine dynamics are precisely the differentiable transitions
with state-independent Jacobians, eliminating the nonlinear propagation
residual and making the error propagation operators depend only on the action sequence.
Guided by this result, we introduce \ours{} (\textbf{S}tate-\textbf{A}ffine
\textbf{L}atent \textbf{T}ransition), an action-conditioned state-affine
dynamics model in which the action modulates both the state transformation
and the additive update.
We train \ours{} through recursive multi-step rollout supervision, feeding each
predicted latent state back into the transition so that training matches how the model is used during planning.
Across four visual planning environments, \ours{} exhibits
$1.48$--$2.19\times$ higher one-step prediction error than the LeWM
baseline, yet improves closed-loop success in every environment by
$10.0$ percentage points on average.
On OGBench-Cube, the fraction of episodes that fail with a sharp rise in model-predicted cost after execution decreases from $23.3\%$ to $2.0\%$.

\vspace{3mm}
\metadataicon{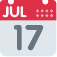}\textbf{Date:} September 27, 2026\par
\metadataicon{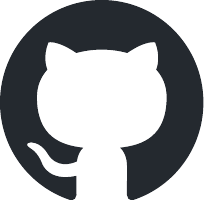}\textbf{Code Repository:} \url{https://github.com/deepmindby/SALT}\par
\metadataicon{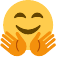}\textbf{Model Weights:} \url{https://huggingface.co/collections/ByDM/salt}\par
\metadataicon{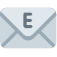}\textbf{Contact:}
\url{zhangboyuan23@mails.ucas.ac.cn}
\end{abstract}
\maketitle
\vspace{3mm}

\section{Introduction}
\label{sec:introduction}

Planning from visual observations requires a model that predicts how candidate actions will change the state of the environment. Joint-embedding world models learn these dynamics in a low-dimensional latent space without reconstructing pixels \citep{assran2023self,balestriero2025lejepa, maes_lelidec2026lewm}. In latent planning, a planner evaluates candidate action sequences by applying the learned transition model recursively \citep{hafner2019learning,sobal2026learning,zhou2025dinowm}. Training, however, often relies on a one-step prediction objective defined on encoded states, as in LeWM~\citep{maes_lelidec2026lewm}. Such supervision does not constrain how prediction errors propagate when the model recursively uses its own predictions. A more accurate one-step predictor can therefore be a less effective planning model.

During recursive rollout, each transition introduces a new prediction error, while earlier errors are transformed by later transitions \citep{asadi2019combating,somalwar2025learning}. One-step prediction error measures the deviation introduced by a transition from an encoded state, but does not describe how subsequent transitions transform that deviation. Over multiple steps, rollout error therefore depends on both the errors introduced along the trajectory and their propagation through the learned dynamics. The challenge is therefore to learn latent dynamics that remain effective for planning as prediction errors are fed back through the model.

To study this effect, we decompose multi-step rollout error into errors introduced at individual steps and their subsequent propagation. For nonlinear state transitions, this propagation involves state-dependent Jacobians and residual terms beyond the first-order approximation. We therefore ask which transition structure makes the propagation operator independent of the rollout state. We show that, among differentiable transitions, this property holds precisely when the predictor is affine in the latent state. In this case, the nonlinear residual vanishes, and the propagation operators are expressed exactly through products of action-conditioned transition matrices. For recursive planning, this provides a way to separate the accuracy of individual predictions from how their errors are transformed along a candidate action sequence, motivating a structural approach to transition design.

Guided by this analysis, we introduce \ours{}, an action-conditioned transition model that remains affine in the latent state. The action modulates both the state transformation and the additive update, allowing different actions to induce different dynamics while keeping the state Jacobian independent of the latent state. This structural choice determines the form of error propagation, but does not by itself align one-step supervision with recursive prediction. We therefore train \ours{} with a multi-step rollout objective, feeding each predicted latent state back into the transition and supervising the resulting trajectory. The state-affine structure determines the form of error propagation, while rollout training aligns supervision with the recursive feedback used in planning.

Across four visual planning environments, \ours{} has $1.48$--$2.19\times$ the one-step prediction error of the matched LeWM reproduction, yet achieves a higher mean closed-loop success rate on each environment. Its average success rate reaches $92.7\%$, exceeding the baseline by $10.0$ percentage points. These results demonstrate that lower one-step prediction error alone does not determine better planning performance. On OGBench-Cube, the rate of failures characterized by sharply higher model-predicted costs after execution and re-observation decreases from
$23.3\%$ of episodes for LeWM to $2.0\%$ for \ours{}. In these cases, plans look promising before execution but no longer do once the model sees the result.
\section{Related Work}
\label{sec:related_work}

\textbf{Latent World Models for Visual Planning.}
Latent world models support planning and control by learning compact predictive representations of visual observations. Early work learned latent state-space dynamics for prediction and control from images \citep{watter2015embed,karl2017deep}. World Models, PlaNet, and subsequent latent-imagination approaches further established learned latent dynamics as an effective basis for control from pixels \citep{ha2018recurrent,hafner2019learning,hafner2021mastering,hafner2025mastering}. More recent methods target visual planning directly. PLDM learns latent dynamics from reward-free offline data \citep{sobal2026learning}, DINO-WM plans over pretrained visual features \citep{zhou2025dinowm}, LeWorldModel (LeWM) jointly learns visual representations and action-conditioned latent dynamics \citep{maes_lelidec2026lewm}, and TD-MPC2 optimizes trajectories in learned latent dynamics \citep{hansen2024tdmpc2}. Related work examines reliability and efficiency in closed-loop use \citep{duan2024learning,zhang2026worldinworld,chun2026sparse}, while large-scale video JEPA models learn predictive representations for physical understanding and planning \citep{assran2025v}. These advances demonstrate the value of latent prediction for planning, but planning performance alone does not characterize how errors propagate through repeated applications of the learned transition. We characterize how repeated composition of the learned transition transforms prediction errors, separating the errors introduced at each step from their transformation by subsequent transitions.

\textbf{Stable and Structured Joint-Embedding World Models.}
Joint-embedding predictive learning aims to learn stable and structured representations without pixel-level reconstruction. I-JEPA learns image representations by predicting target-block embeddings from a context block \citep{assran2023self}. LeJEPA introduced SIGReg to stabilize joint-embedding learning through distributional regularization \citep{balestriero2025lejepa}, and LeWM applies it to end-to-end visual world modeling \citep{maes_lelidec2026lewm}. Recent extensions mainly modify representation learning through quantile matching, subspace regularization, latent decomposition, or inverse-dynamics supervision \citep{yu2026qqworld,zhao2026subjepa,thil2026sdjepa,ivashkov2026sensorimotor}, with related objectives also studied outside the joint-embedding setting \citep{sun2024learning,wang2025dyno}. These representation objectives do not by themselves enforce a state-independent transition Jacobian. Beyond representation learning, \citet{terver2026drivessuccessphysicalplanning} study training and planning
design choices for joint-embedding world models and analyze autoregressive error growth using Lipschitz bounds. We focus on the structural condition under which the state Jacobian is independent of the latent state, yielding an exact error decomposition for action-conditioned state-affine transitions.

\textbf{Multi-Step Prediction and Structured Latent Dynamics.}
Multi-step learning extends prediction supervision along latent trajectories \citep{karl2017deep,hafner2019learning}. Work on compounding model error further studies how multi-step objectives, temporal abstraction, and uncertainty-aware rollouts affect prediction over longer horizons \citep{asadi2019combating,somalwar2025learning,gumbsch2024learning,wang2024coplanner,wang2025dmwm}. However, supervising longer rollouts does not by itself enforce a state-independent transition Jacobian. Structured dynamics constrain the transition directly. Locally linear models allow the transition matrices to vary with the state \citep{watter2015embed}, while softly state-invariant models regularize the dependence of latent state changes on the current state \citep{saanum2024simplifying}. Koopman methods seek representations with linear or bilinear dynamics for prediction and control \citep{brunton2016koopman,takeishi2017learning,lusch2018deep,shi2022deep,huang2023learning,zhao2024deep,chen2024korol}. For fixed actions, linear and bilinear transitions have state-independent Jacobians and belong to the state-affine family. We derive this family from the Jacobian requirement and express its rollout error exactly through action-conditioned propagation matrices. We instantiate this family in a visual joint-embedding world model, learning the encoder and transition jointly from pixels through recursive rollout supervision.
\section{Method}
\label{sec:method}

\subsection{Problem Setup}
\label{sec:problem-setup}

A joint-embedding predictive architecture models future dynamics in a
low-dimensional latent space rather than directly in pixel space.
Let $o_t$ denote the image observation at time $t$ and
$a_t\in\mathbb{R}^{d_a}$ the control input.
The encoder $\phi$, including its projection head, maps an observation to the
latent state $z_t=\phi(o_t)\in\mathbb{R}^{d}$.
The action encoder $\psi$ maps the control input to an action condition
$c_t=\psi(a_t)\in\mathbb{R}^{d}$.
The predictor $F$ predicts the next latent state from $(z_t,c_t)$.
We denote by $\hat z_t$ a latent state generated during autoregressive rollout.

A common training objective for latent transition models is one-step prediction:
\begin{equation}
  \mathcal{L}_{\text{pred}}
  =
  \mathbb{E}_t
  \big\|
  F(z_t,c_t)-z_{t+1}
  \big\|^2.
  \label{eq:pred}
\end{equation}
The predictor is evaluated at latent states encoded from observed trajectories.
\Eqref{eq:pred} therefore directly supervises transitions from encoded states
rather than states generated by previous predictions.

At planning time, the transition model is applied recursively.
Given the current latent state $z_0$ and a goal observation $o_g$ with embedding
$z_g=\phi(o_g)$, the planner optimizes an action sequence over a horizon of
$H$ steps:
\begin{equation}
  \begin{aligned}
  a^*_{0:H-1}
  =\;&
  \arg\min_{a_{0:H-1}}
  \ \big\|\hat z_H-z_g\big\|^2
  \\
  \text{subject to}\quad
  &
  \hat z_0=z_0,
  \qquad
  \hat z_k=F(\hat z_{k-1},c_{k-1}),
  \qquad
  c_{k-1}=\psi(a_{k-1}),
  \quad
  k=1,\dots,H.
  \end{aligned}
  \label{eq:plan}
\end{equation}
We optimize \eqref{eq:plan} using the cross-entropy method (CEM). Each iteration samples $S$ candidate action sequences, evaluates their latent rollouts, and updates the sampling distribution using the sequences with the lowest terminal costs. No new observation is available during a model rollout to correct the predicted states. After executing the selected sequence, the planner replans from the new observation.

\Eqref{eq:pred} evaluates individual transitions from encoded states, whereas planning depends on the composition $F(\cdot,c_{H-1})\circ\cdots\circ F(\cdot,c_0)$ along a predicted trajectory. The resulting rollout error depends on both the errors introduced at individual steps and their propagation through subsequent transitions.

\subsection{Recursive Error Propagation}
\label{sec:error-propagation}

We distinguish two errors during recursive rollout.
The \emph{one-step error}
$\varepsilon_t=F(z_{t-1},c_{t-1})-z_t$
is the deviation incurred by a single transition from the encoded state
$z_{t-1}$.
Its squared norm is the per-transition loss in \eqref{eq:pred}.
The \emph{rollout error}
$\delta_t=\hat z_t-z_t$
measures the accumulated deviation from the encoded reference trajectory after
$t$ recursive prediction steps, starting from $\hat z_0=z_0$.
Substituting
$\hat z_{t-1}=z_{t-1}+\delta_{t-1}$
into the definition of $\delta_t$ and expanding the predictor around the
encoded state $z_{t-1}$ gives
\begin{equation}
  \begin{aligned}
    \delta_t
    &=
    F(\hat z_{t-1},c_{t-1})-z_t
    \\
    &=
    F(z_{t-1}+\delta_{t-1},c_{t-1})
    -
    F(z_{t-1},c_{t-1})
    +
    F(z_{t-1},c_{t-1})
    -
    z_t
    \\
    &=
    J_t\,\delta_{t-1}
    +
    r_t
    +
    \varepsilon_t,
  \end{aligned}
  \label{eq:delta-recursion}
\end{equation}
where
$J_t=\partial_zF(z_{t-1},c_{t-1})$
is the state Jacobian evaluated at $z_{t-1}$, and $r_t$ is the residual beyond
the first-order linearization.

\begin{proposition}[Multi-step error decomposition]
\label{prop:error_decomposition}
Let $F$ be any predictor that is differentiable in the state.
Then, for a rollout of $H$ steps from $\hat z_0=z_0$, the rollout error can be
written as
\begin{equation}
  \delta_H
  =
  \sum_{j=1}^{H}
  \Phi_{j\to H}
  \big(
  \varepsilon_j+r_j
  \big),
  \qquad
  \Phi_{j\to H}
  =
  J_HJ_{H-1}\cdots J_{j+1},
  \qquad
  \Phi_{H\to H}=I.
  \label{eq:decomp}
\end{equation}
If, in addition, $\partial_{z}F(\cdot, c)$ is $L$-Lipschitz in the state with the same constant $L$ for every action condition $c$, then the residual satisfies $\|r_t\|\le\frac{L}{2}\|\delta_{t-1}\|^2$.
\end{proposition}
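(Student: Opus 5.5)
The plan is to unroll the one-step recursion \eqref{eq:delta-recursion} by induction on $H$, and then to bound the residual with the integral form of Taylor's theorem. Throughout, $r_t$ is defined exactly, not approximately:
\[
r_t := F(z_{t-1}+\delta_{t-1},c_{t-1}) - F(z_{t-1},c_{t-1}) - J_t\,\delta_{t-1}.
\]
With this definition, $\delta_t = J_t\delta_{t-1} + r_t + \varepsilon_t$ holds as an identity for every $t\ge 1$, using only that $J_t=\partial_zF(z_{t-1},c_{t-1})$ exists.

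\textbf{Step 1: the decomposition.} We induct on $H$, starting from $\delta_0=\hat z_0-z_0=0$. For $H=1$ the recursion gives $\delta_1=r_1+\varepsilon_1=\Phi_{1\to1}(\varepsilon_1+r_1)$, which is \eqref{eq:decomp} because $\Phi_{1\to1}=I$. For the inductive step, assume \eqref{eq:decomp} holds at horizon $H-1$. Then
\[
\delta_H = J_H\delta_{H-1} + \varepsilon_H + r_H = \sum_{j=1}^{H-1} J_H\Phi_{j\to H-1}(\varepsilon_j+r_j) + (\varepsilon_H+r_H).
\]
It remains to check the product identity $J_H\Phi_{j\to H-1}=\Phi_{j\to H}$, which holds directly from the definition of $\Phi$ as an ordered product. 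Note that the empty product at $j=H-1$ gives $\Phi_{H-1\to H-1}=I$, so that term becomes $J_H=\Phi_{H-1\to H}$. Differentiability is only needed so that the $J_t$ are well defined, and no smallness of $\delta$ is assumed.

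\textbf{Step 2: the residual bound.} Fix $t$ and write $c=c_{t-1}$, $z=z_{t-1}$, $\delta=\delta_{t-1}$. Define $g(s)=F(z+s\delta,c)$ for $s\in[0,1]$. Since $\partial_zF(\cdot,c)$ is Lipschitz, it is continuous, so $g$ is $C^1$ with $g'(s)=\partial_zF(z+s\delta,c)\,\delta$. The fundamental theorem of calculus (applied componentwise) then gives
\[
r_t=\int_0^1\big(\partial_zF(z+s\delta,c)-\partial_zF(z,c)\big)\delta\,ds.
\]
Take norms, with the operator norm on the Jacobian difference matching the norm used in the Lipschitz assumption. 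Each integrand is bounded by $L s\|\delta\|\cdot\|\delta\|$, and integrating $s$ over $[0,1]$ yields $\|r_t\|\le \tfrac L2\|\delta_{t-1}\|^2$. Because $L$ is uniform over action conditions, the bound holds for every $t$ regardless of the action sequence.

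\textbf{Main obstacle.} Neither step is hard. The only points needing care are, first, defining $r_t$ exactly so that the decomposition is an identity rather than a first-order approximation, and second, justifying the integral form of the remainder for the vector-valued map. For the latter I would apply the fundamental theorem of calculus componentwise and then bound the norm of the integral by the integral of the norm.
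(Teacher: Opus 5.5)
Your proposal is correct and follows the paper's proof essentially step for step: the same exact definition of $r_t$, induction on $H$ using $J_H\Phi_{j\to H-1}=\Phi_{j\to H}$, and the integral form of the remainder with the Lipschitz bound integrated to give $L/2$. One small addition over the paper is your remark that Lipschitz continuity of $\partial_zF(\cdot,c)$ makes $s\mapsto F(z+s\delta,c)$ continuously differentiable, which justifies the fundamental theorem of calculus step that the paper leaves implicit.
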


\noindent
The proof is given in Appendix~\ref{app:decomposition}.
The operator $\Phi_{j\to H}$ describes how an error introduced at step $j$ is
subsequently transformed by all later transitions before reaching step $H$.

\Eqref{eq:decomp} separates two factors in multi-step prediction:
the error introduced by individual transitions and the propagation of those
errors through later transitions.

For nonlinear transitions, $r_j$ need not vanish, and the Jacobians in $\Phi_{j\to H}$ can vary with the encoded reference states at which they are evaluated. The propagation operator can therefore depend on the state trajectory as well as the action sequence.

We examine this state dependence in LeWM \citep{maes_lelidec2026lewm} by fixing the action sequence and varying the current latent state. Since LeWM conditions on a short latent history, we differentiate with respect to its most recent latent input while holding the earlier history inputs fixed when computing each Jacobian. \Cref{fig:jacobian-state} shows that the largest singular value of this Jacobian varies substantially with the current latent state.

\begin{figure}[t]
  \centering
  \begin{subfigure}[t]{0.48\linewidth}
    \centering
    \includegraphics[width=\linewidth]{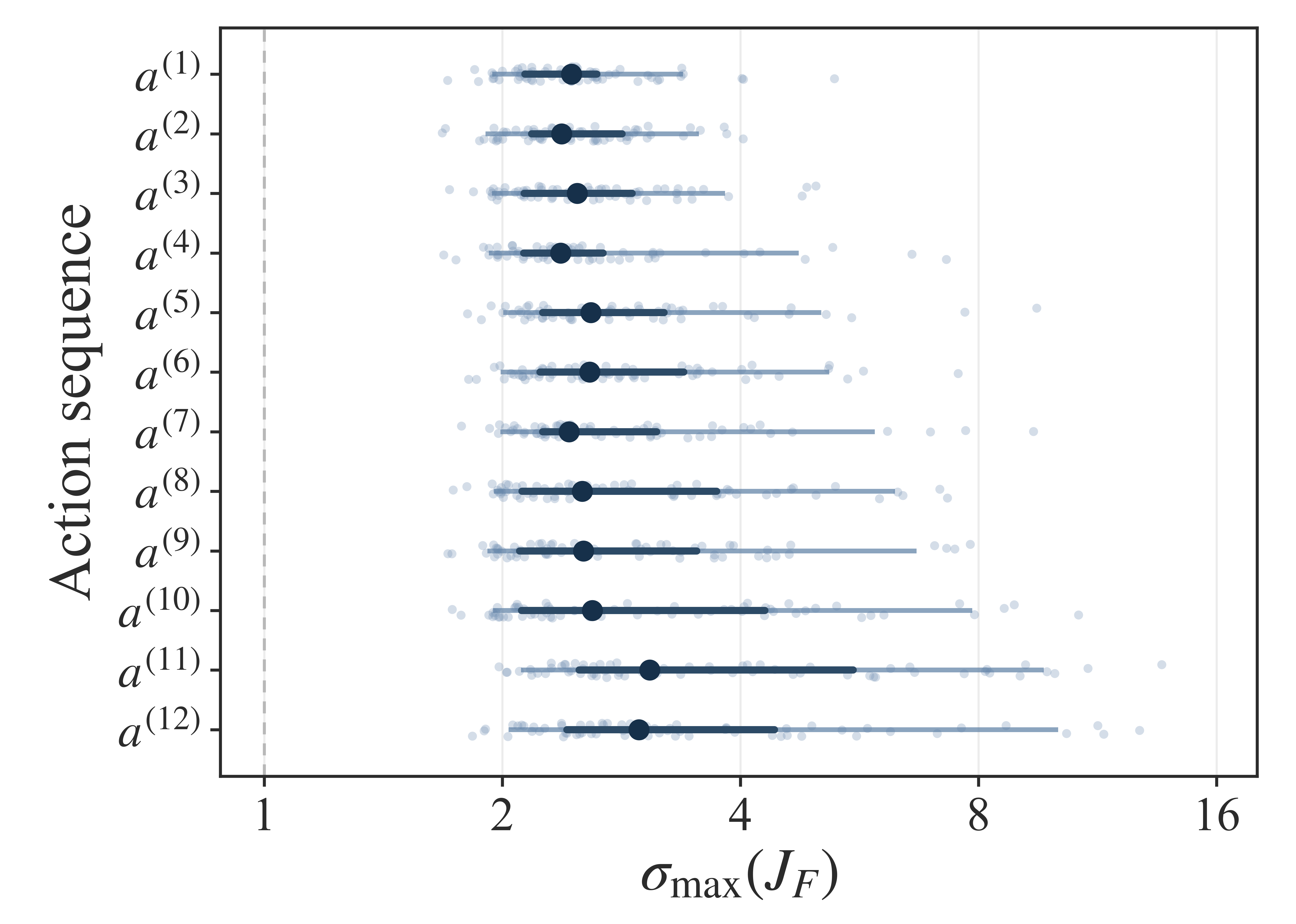}
    \caption{State dependence of the Jacobian.}
    \label{fig:jacobian-state}
  \end{subfigure}
  \hfill
  \begin{subfigure}[t]{0.48\linewidth}
    \centering
    \includegraphics[width=\linewidth]{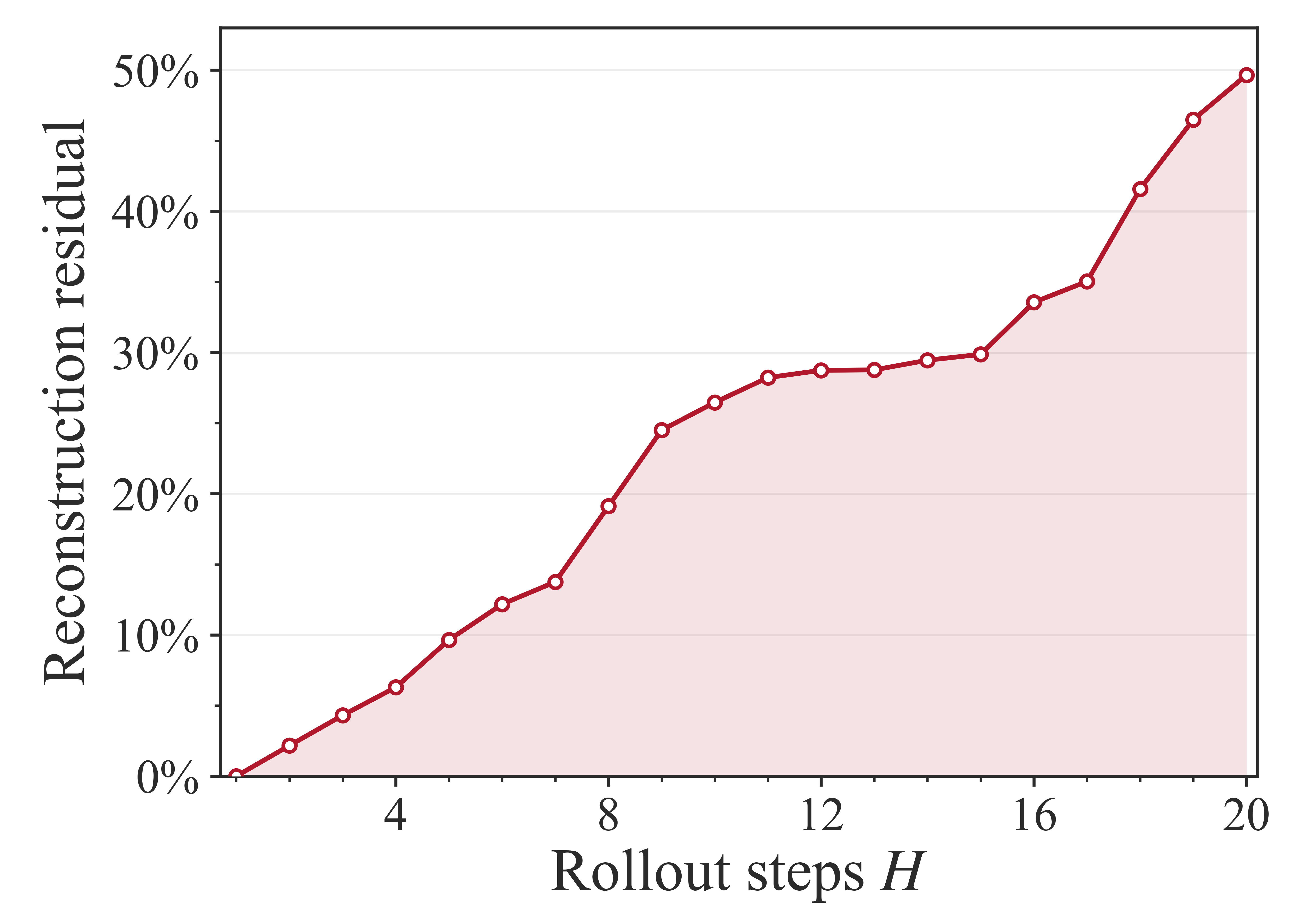}
    \caption{Reconstruction discrepancy versus horizon.}
    \label{fig:jacobian-residual}
  \end{subfigure}
  \caption{
  \textbf{State dependence and current-state rollout reconstruction of the
  LeWM predictor.}
  (a) Distribution of $\sigma_{\max}(\partial_zF)$ for the current-state
  Jacobian across 12 fixed action sequences while varying the current latent
  state.
  Points denote samples, the thick bar shows the interquartile range, and the
  dashed line marks $\sigma_{\max}=1$.
  (b) Relative discrepancy between the rollout error and its current-state
  first-order reconstruction as the rollout horizon increases. The reconstruction uses only current-state Jacobians and omits propagation through earlier history inputs.
  }
  \label{fig:jacobian}
\end{figure}

For this diagnostic, we form $\Phi_{j\to H}$ from products of these current-state Jacobians and measure the relative reconstruction discrepancy
\begin{equation}
  \rho_H = \frac{\left\|\delta_H-\sum_{j=1}^{H}\Phi_{j\to H}\varepsilon_j\right\|}{\|\delta_H\|}.
\end{equation}
As shown in \Cref{fig:jacobian-residual}, this discrepancy increases from about $10\%$ at $H=5$ to about $49\%$ at $H=20$. The reconstruction excludes propagation through earlier history inputs, so the discrepancy can contain both nonlinear effects in the current-state channel and contributions from the omitted history channels. Thus, the current-state linear reconstruction leaves an increasing fraction of the rollout error unexplained over the tested horizons.

\subsection{State-Affine Dynamics}
\label{sec:state-affine}

We seek a transition family whose propagation operators are determined by the action sequence at every rollout step. We therefore require the state Jacobian to be independent of the latent state. For differentiable transitions, this requirement is equivalent to an affine dependence on the latent state.

\begin{proposition}[Characterization of state-independent Jacobians]
\label{prop:state-affine}
For each fixed action condition $c$, let $F(\cdot,c):\mathbb R^d\to\mathbb R^d$ be differentiable. Its state Jacobian is independent of $z$ if and only if $F(\cdot,c)$ is affine:
\begin{equation}
  F(z,c)
  =
  \mathcal{A}(c)\,z
  +
  \mathcal{B}(c).
  \label{eq:affine}
\end{equation}
\end{proposition}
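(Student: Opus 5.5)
The plan is to handle each direction separately for a fixed action condition $c$, treating $F(\cdot,c)$ as a differentiable map $\mathbb R^d\to\mathbb R^d$. Nothing from Proposition~\ref{prop:error_decomposition} is needed; the statement is purely about maps on $\mathbb R^d$.

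The ``if'' direction is immediate. If $F(z,c)=\mathcal A(c)z+\mathcal B(c)$, then differentiating in $z$ gives $\partial_z F(z,c)=\mathcal A(c)$ for every $z$, which depends only on $c$.

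For the ``only if'' direction, suppose $\partial_z F(z,c)=M(c)$ for all $z\in\mathbb R^d$. Define $G(z)=F(z,c)-M(c)z$. Then $G$ is differentiable with $\partial_z G\equiv 0$ on $\mathbb R^d$. Fix any $z$ and consider the path $g(s)=G(sz)$ for $s\in[0,1]$. This map is differentiable with $g'(s)=\partial_zG(sz)\,z=0$.

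Next, apply the one-variable mean value theorem to each coordinate of $g$. This gives $g(1)=g(0)$, so $G(z)=G(0)$. Setting $\mathcal A(c)=M(c)$ and $\mathcal B(c)=F(0,c)$ then yields \eqref{eq:affine}.

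Two points need care, though neither is a serious obstacle. First, $F$ is only assumed differentiable, not $C^1$, so I will not integrate the derivative via the fundamental theorem of calculus. The coordinatewise mean value theorem only needs differentiability, and the chain rule holds for differentiable maps composed with the affine path $s\mapsto sz$. Second, the argument relies on $\mathbb R^d$ being connected; indeed the line segment from $0$ to $z$ lies in the domain. On a disconnected domain, $\mathcal B$ could be piecewise constant.

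Finally, the coefficients $\mathcal A(c)$ and $\mathcal B(c)$ are defined pointwise in $c$. The proposition makes no regularity claim about how they depend on $c$, so none needs to be proved.
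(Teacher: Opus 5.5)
Your proof is correct and follows the paper's own argument: sufficiency by direct differentiation, and necessity by subtracting the linear part $\mathcal A(c)z$ and showing that the remainder, whose derivative is zero, is constant along line segments in $\mathbb R^d$. The only difference is that you apply the coordinatewise mean value theorem where the paper integrates $\partial_z G$ along the segment. The paper's integral step is also valid under mere differentiability, because the integrand is identically zero and hence Riemann integrable, so your version makes explicit a point the paper leaves implicit rather than repairing a gap.
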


\noindent
The proof is given in Appendix~\ref{app:affine-characterization}.

Under \eqref{eq:affine}, any state deviation $\delta$ is propagated exactly as
\[
F(z+\delta,c)-F(z,c)
=
\mathcal{A}(c)\delta.
\]
Consequently, \eqref{eq:delta-recursion} gives
$r_t\equiv0$ and
$J_t=\mathcal{A}(c_{t-1})$.
The multi-step error decomposition therefore reduces to
\begin{equation}
  \delta_H
  =
  \sum_{j=1}^{H}
  \Phi_{j\to H}\,
  \varepsilon_j,
  \qquad
  \Phi_{j\to H}
  =
  \mathcal{A}(c_{H-1})
  \mathcal{A}(c_{H-2})
  \cdots
  \mathcal{A}(c_j).
  \label{eq:decomp-affine}
\end{equation}

For a fixed action sequence, the propagation matrices in \eqref{eq:decomp-affine} can be computed directly from $\mathcal A(c_k)$ without generating the latent rollout. We use this property for analysis and diagnostics, while planning uses the CEM procedure defined in \Cref{sec:problem-setup}.

\subsection{Action-Conditioned State-Affine Transition}
\label{sec:action-conditioned}

\Cref{prop:state-affine} requires an affine dependence on the latent state but does not prescribe the functional forms of $\mathcal{A}(c)$ and $\mathcal{B}(c)$. We use the following parameterization:
\begin{equation}
  F(z,c)
  =
  \mathcal{A}(c)\,z
  +
  Bc+b,
  \qquad
  \mathcal{A}(c)
  =
  \mathcal{A}_0
  +
  \sum_{r=1}^{R}
  \big(W_gc\big)_r\,N_r,
  \label{eq:transition}
\end{equation}
where $\mathcal{A}_0,N_r,B\in\mathbb{R}^{d\times d}$, $W_g\in\mathbb{R}^{R\times d}$, and $b\in\mathbb{R}^{d}$. The coefficient $(W_gc)_r$ weights the modulation matrix
$N_r$, while $Bc+b$ parameterizes $\mathcal{B}(c)$.
This allows actions to change the state transformation itself while preserving affine dependence on the latent state. \Cref{app:action-ablation} evaluates the role of this action-dependent modulation.

We constrain $\sigma_{\max}(\mathcal{A}_0)<1$ to limit the gain of the shared base transition. \Cref{app:implementation} provides the parameterization and initialization details.

\subsection{Recursive Rollout Training}
\label{sec:training}

The state-affine structure makes recursive error propagation explicit, but it
does not by itself resolve the mismatch between one-step supervision and the
recursive use of the transition at planning time.
We therefore train \ours{} with a recursive multi-step rollout objective.

Let $K$ denote the training rollout length, which is distinct from the planning
horizon $H$.
Starting from $\hat z_0=z_0$, the transition is applied recursively as
\[
\hat z_k
=
F(\hat z_{k-1},c_{k-1}),
\]
and every predicted state is supervised:
\begin{equation}
  \mathcal{L}_{\text{roll}}
  =
  \mathbb{E}
  \left[
  \frac{1}{K}
  \sum_{k=1}^{K}
  \big\|
  \hat z_k-z_k
  \big\|^2
  \right].
  \label{eq:roll}
\end{equation}
During training, predicted latents are fed back into the transition, matching the recursive use of the model during planning. We use $K=5$, supervising all five predictions from a six-frame training window.

The training objective combines recursive rollout supervision with SIGReg:
\begin{equation}
  \mathcal{L}
  =
  \mathcal{L}_{\text{roll}}
  +
  \lambda\,\mathcal{R}_{\text{SIGReg}},
  \label{eq:total}
\end{equation}
Here, $\mathcal{R}_{\text{SIGReg}}$ regularizes the encoded latents from all frames in the training window, with $\lambda=0.09$. SIGReg discourages representation collapse by encouraging the latent distribution to match an isotropic Gaussian \citep{balestriero2025lejepa}. We jointly optimize the encoder, action encoder, and transition, retaining gradients through the target latents as in LeWM \citep{maes_lelidec2026lewm}.

The state-affine structure determines the form of error propagation, while rollout supervision trains the model under the recursive feedback used in planning.
\section{Experiments}
\label{sec:experiments}

\subsection{Experimental Setup}
\label{sec:setup}

We evaluate \ours{} on four visual planning environments, following the
benchmark setting of LeWM \citep{maes_lelidec2026lewm}: Two-Room
\citep{sobal2025stresstesting}, Reacher
\citep{tassa2018deepmindcontrolsuite}, PushT
\citep{chi2023diffusionpolicy,zhou2025dinowm}, and OGBench-Cube
\citep{ogbench_park2025}.
Planning is performed with CEM \citep{rubinstein2004cross}, using the MSE
between the predicted terminal latent state and the goal embedding as the
terminal cost.
The planning horizon and the replanning interval are both set to $5$, so the
planner replans from a new observation only after the entire planned sequence
has been executed.
For each environment, we evaluate one trained model per method on three evaluation seeds of $50$ episodes each, and report the mean and standard deviation across seeds. Each goal is the observation $25$ environment steps after the initial state in the same offline trajectory.
All experiments conducted in our implementation use a single NVIDIA H100 $80$\,GB GPU.
Additional environment, training, and evaluation details are provided in Appendix~\ref{app:detail_setup}.

For controlled comparison with the reproduced baseline, the encoder
architecture, action encoder, representation regularizer, and planner are
matched between the two models. The models differ in their transition structure and prediction training configuration, whose combinations are evaluated in \Cref{tab:objective}.

We evaluate Random, reproduced LeWM, and \ours{} under a shared protocol. Reproduced LeWM serves as the controlled baseline for all paired comparisons. Results for PLDM \citep{sobal2026learning}, DINO-WM \citep{zhou2025dinowm}, Sub-JEPA \citep{zhao2026subjepa}, SD-JEPA \citep{thil2026sdjepa}, and SMWM \citep{ivashkov2026sensorimotor} are taken from their respective papers and reported separately in \Cref{tab:main} to provide context.

\subsection{Closed-Loop Planning Performance}
\label{sec:planning-performance}

\Cref{tab:main} reports closed-loop planning success rates on the four
environments.

\begin{table}[!htbp]
\centering
\small
\setlength\tabcolsep{3pt}
\renewcommand\arraystretch{1.3}
\caption{\textbf{Closed-loop planning success rates (\%) across four environments.} Controlled evaluations use $H=5$ and report the mean$\pm$standard deviation over three evaluation seeds of 50 episodes each. $\dagger$ quoted from the literature under different protocols. Bold marks the best mean among controlled rows.}
\label{tab:main}
\begin{tabularx}{\linewidth}{r||YYYY|Y}
\hline\thickhline
\rowcolor{gray!20}
Method & Two-Room & Reacher & PushT & \mbox{OGBench-Cube} & Avg. \\
\hline\hline
\rowcolor{gray!10}
PLDM$^\dagger$     & 97 & 78 & 78 & 65 & 79.50 \\
\rowcolor{gray!10}
DINO-WM$^\dagger$  & 100 & 79 & 74 & 86 & 84.75 \\
\rowcolor{gray!10}
Sub-JEPA$^\dagger$ & 95.00 & 84.00 & 89.00 & 76.33 & 86.08 \\
\rowcolor{gray!10}
SD-JEPA$^\dagger$  & 90.00 & 88.00 & 97.30 & 72.00 & 86.83 \\
\rowcolor{gray!10}
SMWM$^\dagger$     & 99.00 & 66.00 & 83.00 & 84.00 & 83.00 \\
\hline
Random   & 2.00{\scriptsize$\pm$2.00} & 12.00{\scriptsize$\pm$4.00} & 5.30{\scriptsize$\pm$2.30} & 42.00{\scriptsize$\pm$4.00} & 15.33 \\
LeWM     & 90.67{\scriptsize$\pm$4.60} & 85.33{\scriptsize$\pm$2.30} & 87.33{\scriptsize$\pm$4.71} & 67.33{\scriptsize$\pm$4.20} &  82.67\\
\hline
\rowcolor[HTML]{D7F6FF}
\textbf{\ours{}} & \textbf{98.00}{\scriptsize$\pm$2.00} & \textbf{90.70}{\scriptsize$\pm$3.10} & \textbf{88.00}{\scriptsize$\pm$5.20} & \textbf{94.00}{\scriptsize$\pm$2.00} & \textbf{92.68} \\
\hline\thickhline
\end{tabularx}
\end{table}

\textbf{Planning Performance Across Four Environments.}
\ours{} reaches an average success rate of $92.7\%$, exceeding reproduced LeWM by $10.0$ percentage points under the matched evaluation protocol. Its mean success rate is higher in all four environments. The largest gain occurs on OGBench-Cube, where success increases from $67.3\%$ to $94.0\%$, a gain of $26.7$ percentage points.

\paragraph{Inference and Planning Efficiency.}
\Cref{tab:efficiency} compares model size and computational efficiency. Counting the predictor and its projection head where present, \ours{} uses $0.70$\,M parameters, compared with $11.58$\,M for LeWM. Relative to LeWM, \ours{} achieves an $8.1\times$ speedup in single-step prediction and a $3.1\times$ speedup in CEM planning. This compact transition reduces the cost of repeatedly evaluating candidate action sequences, a central computation in model-based planning.

\begin{table}[!htbp]
\centering
\small
\setlength\tabcolsep{3pt}
\renewcommand\arraystretch{1.3}
\caption{\textbf{Predictor complexity and computational efficiency.} Parameter counts and single-step forward times cover the predictor together with its projection head. Planning time is the CEM wall-clock time averaged over the four environments and $H\in\{5,10,15,20\}$. Additional timing details are provided in Appendix~\ref{app:efficiency}.}
\label{tab:efficiency}
\begin{tabularx}{\linewidth}{r||YY|YY|Y}
\hline\thickhline
\rowcolor{gray!20}
 & & & \multicolumn{2}{c|}{Training (it/s)} & \\
\cline{4-5}
\rowcolor{gray!20}
\multirow{-2}{*}{Predictor} & \multirow{-2}{*}{Params}
& \multirow{-2}{*}{\shortstack{Forward (ms)}}
& One-step & Rollout & \multirow{-2}{*}{\shortstack{Planning (s)}} \\
\hline\hline
LeWM (ViT-S)           & 11.58M & 2.36 & 6.5 & 2.9 & 35.2 \\
\rowcolor[HTML]{D7F6FF}
\textbf{\ours{} (Affine)} & 0.70M {\scriptsize\textcolor{red}{\textbf{($\times$16.5)}}} & 0.29 {\scriptsize\textcolor{red}{\textbf{($\times$8.1)}}}
                       & 6.0 & 2.4 & 11.4 {\scriptsize{\textcolor{red}{\textbf{($\times$3.1)}}}} \\
\hline\thickhline
\end{tabularx}
\end{table}

\subsection{Why One-Step Accuracy Fails to Predict Planning Quality}
\label{sec:better-planner}

\ours{} achieves better closed-loop planning performance than LeWM despite
having a higher one-step prediction error.
We now ask what explains this apparent contradiction.
\Cref{fig:mechanism} examines three increasingly deployment-relevant
quantities: one-step prediction error, recursive rollout error, and the
propagation of errors through subsequent transitions.

\paragraph{One-Step Error Is Not a Reliable Proxy for Planning Quality.}
As shown in \Cref{fig:mech-epsilon}, the one-step prediction error of \ours{}
is $1.48\times$--$2.19\times$ that of LeWM across the four environments, yet
\ours{} achieves a higher mean closed-loop success rate in every case.
Moreover, the relative one-step error does not track the planning gain:
Two-Room has the largest relative prediction error but only an intermediate
planning gain, whereas OGBench-Cube has one of the smallest relative errors and
the largest gain.
Thus, in the evaluated setting, one-step prediction error alone is not a
reliable proxy for closed-loop planning quality.

\paragraph{Rollout Error Alone Does Not Explain the Planning Gap.}
As shown in \Cref{fig:mech-error}, the rollout-error ranking of the two models
reverses during recursive prediction.
On PushT, \ours{} has higher rollout error than LeWM over the first four steps
and lower error thereafter.
Over twenty steps, rollout error increases by factors of $6.29$ and $12.83$
for \ours{} and LeWM, respectively.
The four-environment measurements in \Cref{app:four-env-propagation} further
show that the models have similar rollout errors on OGBench-Cube over much
of the evaluated horizon, despite the $26.7$ percentage point planning gap.
These observations indicate that rollout-error magnitude alone does not
fully characterize the difference in planning performance.

\paragraph{Evidence for Milder Recursive Error Propagation.}
As shown in \Cref{fig:mech-phi}, the measured propagation-operator norms on
PushT are comparable at the first step but grow at different rates thereafter.
Over twenty steps, the norm increases by a factor of $4.28$ for \ours{} and
$8.74$ for LeWM, and the ranges over nine measurements no longer overlap after
the fourth step.
Consistent with the exact state-affine decomposition in
\eqref{eq:decomp-affine}, the relative reconstruction discrepancy of \ours{}
remains at the $10^{-6}$ level across rollout steps.
In contrast, the current-state first-order reconstruction of LeWM becomes
increasingly incomplete with horizon, as detailed in Appendix~\ref{app:error-validation}. These diagnostics are consistent with milder propagation in the measured state channel, alongside the planning advantage of \ours{}.

\begin{figure*}[t]
  \centering
  \begin{subfigure}[t]{0.32\linewidth}
    \centering
    \includegraphics[width=\linewidth]{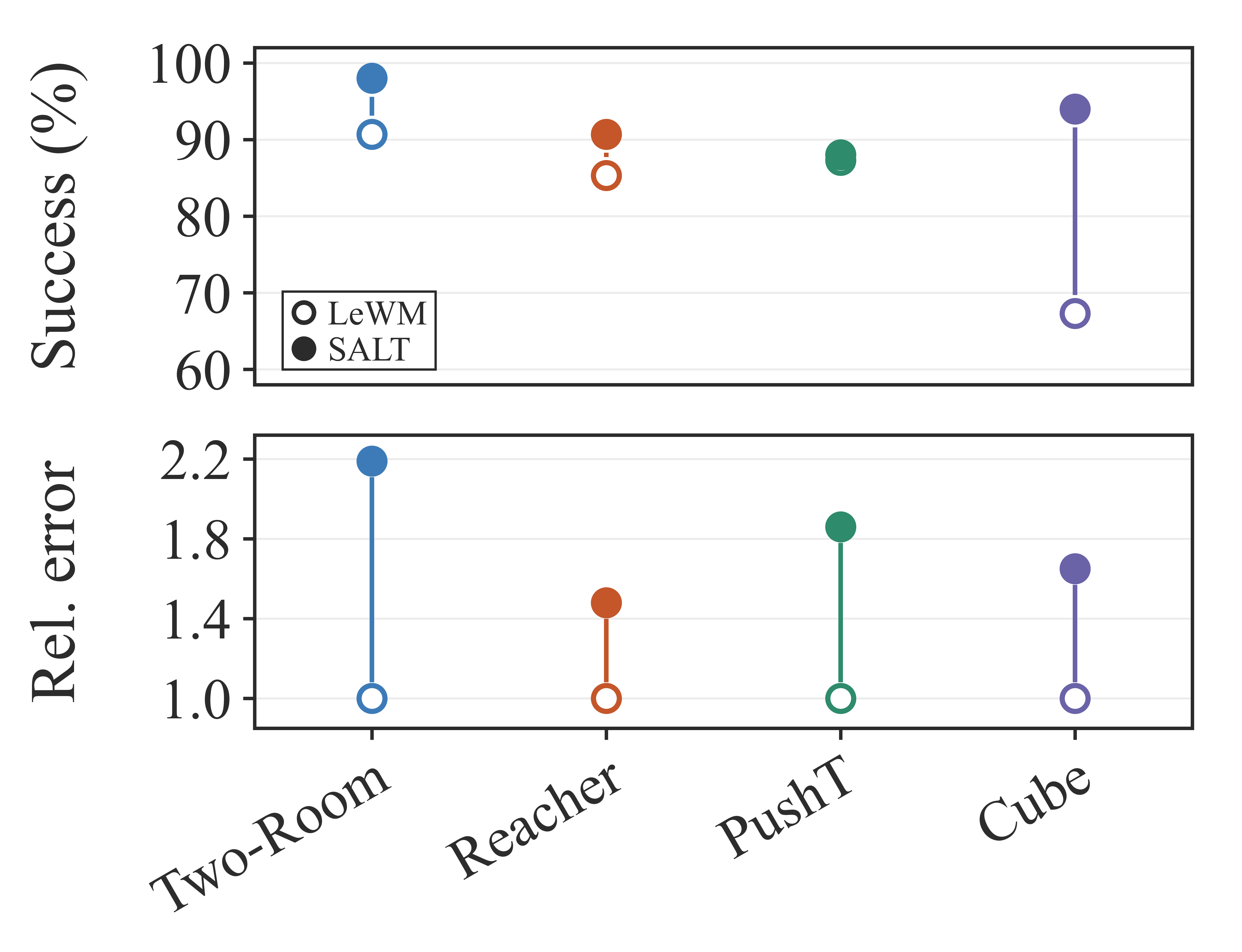}
    \caption{One-step error vs.\ success.}
    \label{fig:mech-epsilon}
  \end{subfigure}
  \hfill
  \begin{subfigure}[t]{0.32\linewidth}
    \centering
    \includegraphics[width=\linewidth]{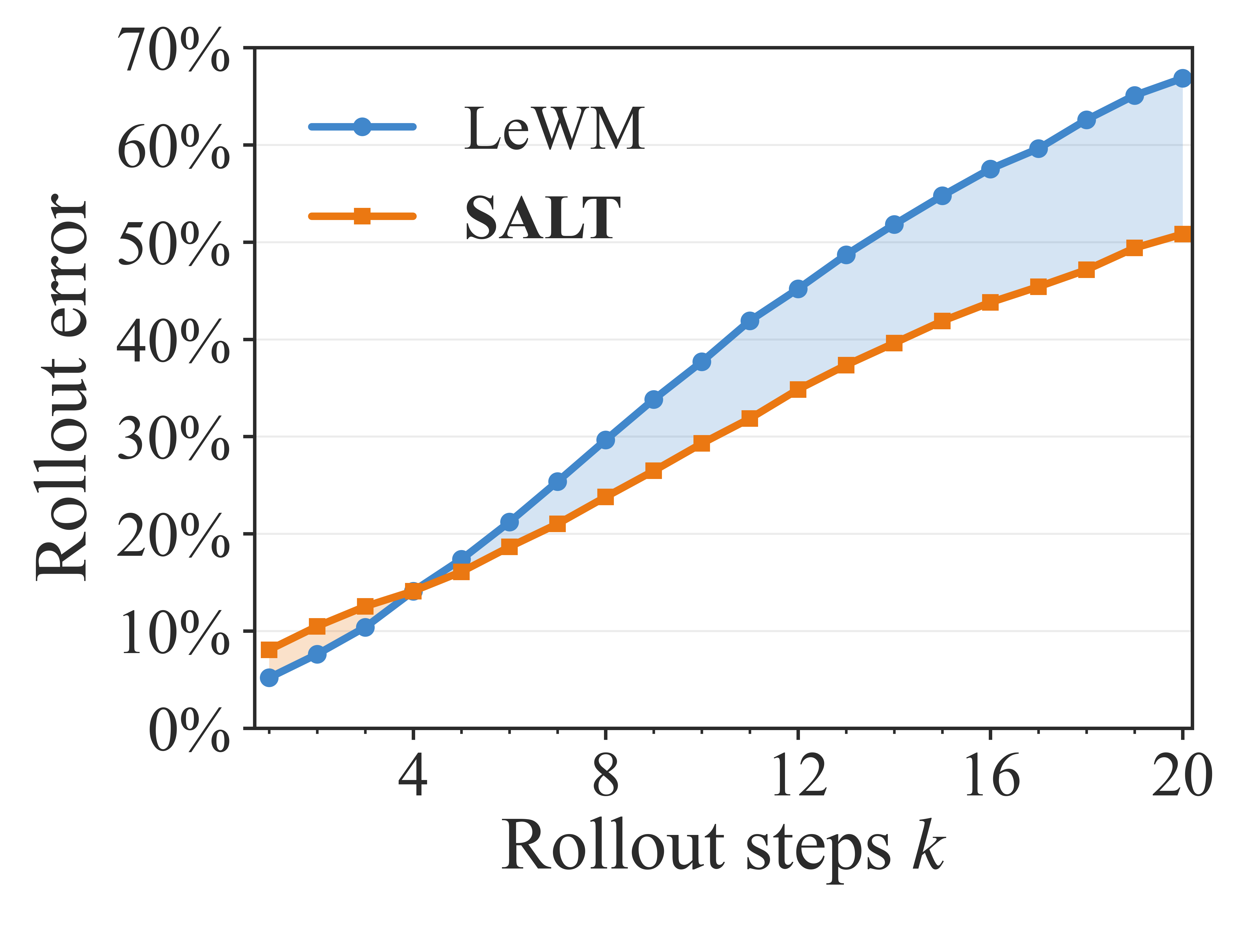}
    \caption{Rollout error growth.}
    \label{fig:mech-error}
  \end{subfigure}
  \hfill
  \begin{subfigure}[t]{0.32\linewidth}
    \centering
    \includegraphics[width=\linewidth]{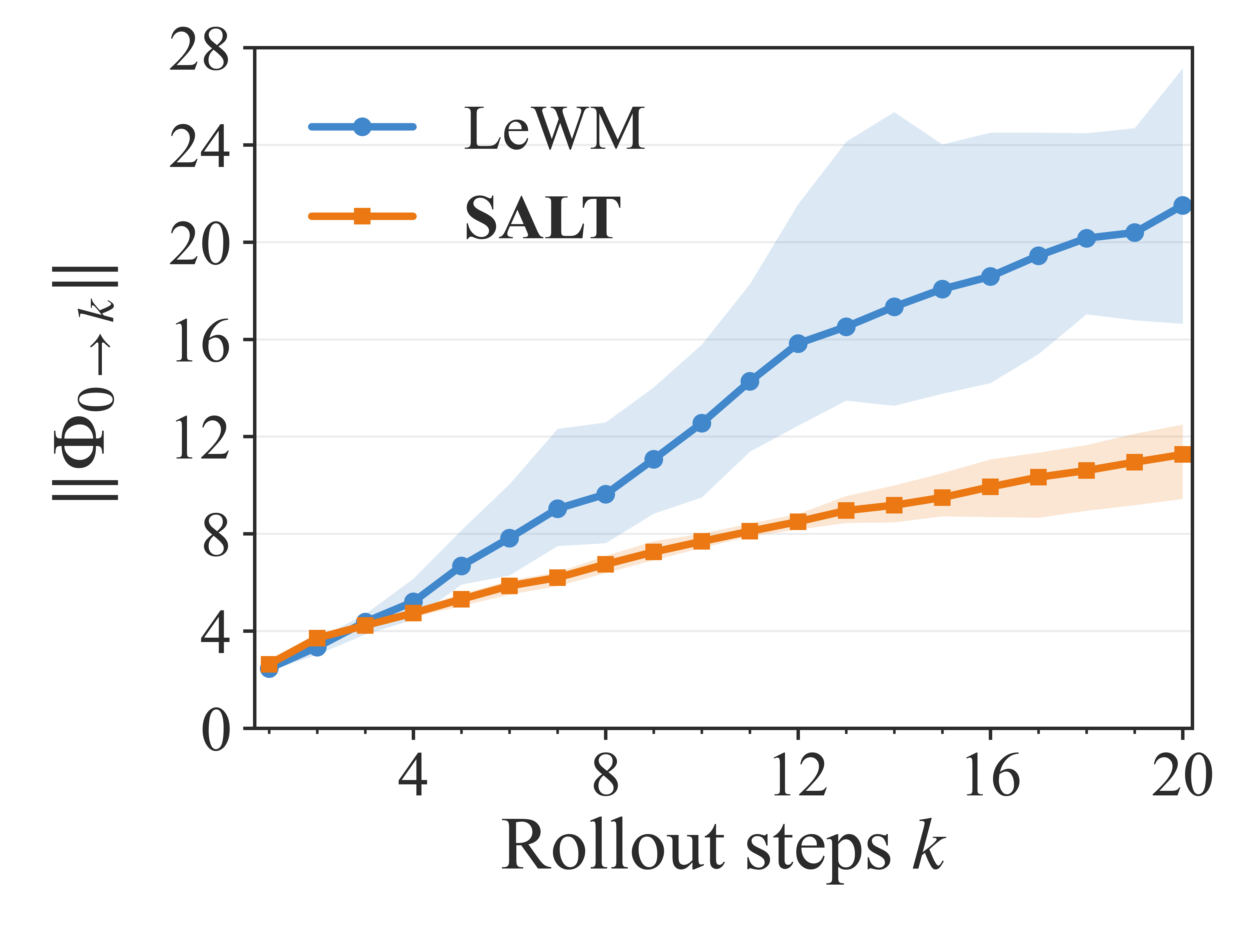}
    \caption{Propagation-operator norm.}
    \label{fig:mech-phi}
  \end{subfigure}
  \caption{
  \textbf{From one-step accuracy to recursive error propagation.}
  (a) Relative one-step prediction error and closed-loop success across four
  environments.
  (b) Recursive rollout error on PushT.
  (c) Spectral norm of the measured propagation operator on PushT; the band
  spans the minimum and maximum over nine measurements.
  }
  \label{fig:mechanism}
  \vspace{-6pt}
\end{figure*}

\paragraph{Interaction Between Transition Structure and Rollout Training.}
\Cref{tab:objective} reports mean success across the four environments for
the four combinations of transition structure and training objective, with
matched training windows, target frames, and training budgets.
Rollout training increases the mean success of \ours{} from $84.4\%$ to
$92.7\%$, whereas the LeWM mean decreases from $82.7\%$ to $77.8\%$.
The full configuration achieves the highest mean, indicating that the
benefit of recursive rollout supervision depends on the transition structure.

\begin{table}[!htbp]
\centering
\small
\setlength\tabcolsep{2pt}
\renewcommand\arraystretch{1.3}
\caption{\textbf{Transition structure and training objective.}
Mean closed-loop success rate (\%) over Two-Room, Reacher, PushT, and
OGBench-Cube, with equal weight assigned to each environment.
Training windows, target frames, and training budgets are matched across objectives.}
\label{tab:objective}
\begin{tabularx}{\linewidth}{r||YY}
\hline\thickhline
\rowcolor{gray!20}
\textbf{Objective} & \textbf{LeWM} & \textbf{\ours{}} \\
\hline\hline
$\text{One-step}$ & 82.7 & 84.4 \\
$\text{Rollout}$ & 77.8 & \textbf{92.7} \\
\hline
$\Delta$ & \textbf{-4.9}  & \textbf{+8.3} \\
\hline\thickhline
\end{tabularx}
\end{table}

\subsection{Planning Reliability: Failure Anatomy on OGBench-Cube}
\label{sec:failure-anatomy}

The largest planning gap occurs on OGBench-Cube. We examine a separate paired
evaluation of $150$ episodes per model, with identical initial states and goals,
as described in \Cref{app:anatomy}.

\paragraph{Mismatch Failures and Unflagged Failures.}
For each failed episode, we compare two model-predicted terminal costs:
$\hat c$ is the mean elite cost at the final CEM iteration before execution,
and $c$ is the mean elite cost at the first CEM iteration when replanning
after execution and re-observation. Both are computed by the same model, at different planning calls and optimization stages, as detailed in \Cref{app:anatomy}.
We classify failures using $r=c/\hat c$: those with $r>5$ are \textit{mismatch failures}, and those with $r\leq5$ are \textit{unflagged failures}. The threshold is selected within a gap in LeWM's seed-$1$ ratio distribution and held fixed for both models and the remaining seeds. Of the $52$ LeWM failures, $35$ are mismatch failures.

\paragraph{Mismatch Failures Are Sharply Reduced.}
Across all $150$ paired episodes per model, the mismatch failure rate decreases from $23.3\%$ for LeWM to $2.0\%$ for \ours{}, while the unflagged failure rate decreases from $11.3\%$ to $4.0\%$, as shown in \Cref{fig:failure}. Most of the reduction in failures therefore occurs in the mismatch category. Across the 150 paired episodes, \ours{} succeeds on $43$ episodes where LeWM fails, and LeWM succeeds on no episode where \ours{} fails. These results connect the planning gain to fewer failures in which costs rise sharply after the model observes the execution outcome.

\begin{figure*}[t]
  \centering
  \includegraphics[width=\linewidth]{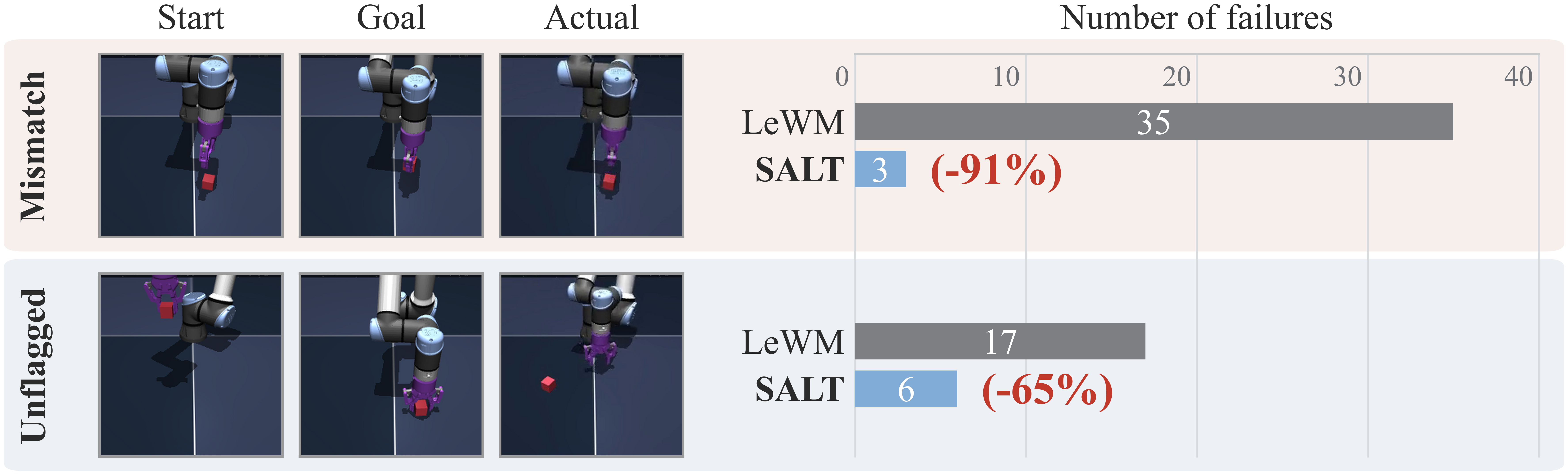}
  \caption{
  \textbf{Failure anatomy on OGBench-Cube.} Among failed episodes, \textit{mismatch failures} satisfy $r=c/\hat c>5$, while \textit{unflagged failures} satisfy $r\leq5$. Here, $\hat c$ and $c$ are model-predicted terminal costs before execution and during subsequent replanning, respectively. The examples are reproduced LeWM failures, and all images are real observations. Bars report counts over $150$ paired episodes per model.
  }
  \label{fig:failure}
  \vspace{-6pt}
\end{figure*}

\subsection{Long-Horizon Stress Test}
\label{sec:long-horizon}

We test whether the planning advantage persists at horizons $H\in\{10,15,20\}$, executing the full planned sequence before each new observation. This stress test uses goals sampled $100$ environment steps after the initial state and an interaction budget of $300$ environment steps. The full protocol is given in \Cref{app:long-horizon}. As shown in \Cref{fig:horizon}, \ours{} achieves higher success in all twelve combinations of environment and horizon. At $H=20$, its advantage is $11$, $8$, $4$, and $17$ percentage points on Two-Room, Reacher, PushT, and OGBench-Cube, respectively. \ours{} therefore retains its planning advantage when the transition is applied recursively over longer intervals without new observations.

\begin{figure*}[t]
  \centering
  \includegraphics[width=\linewidth]{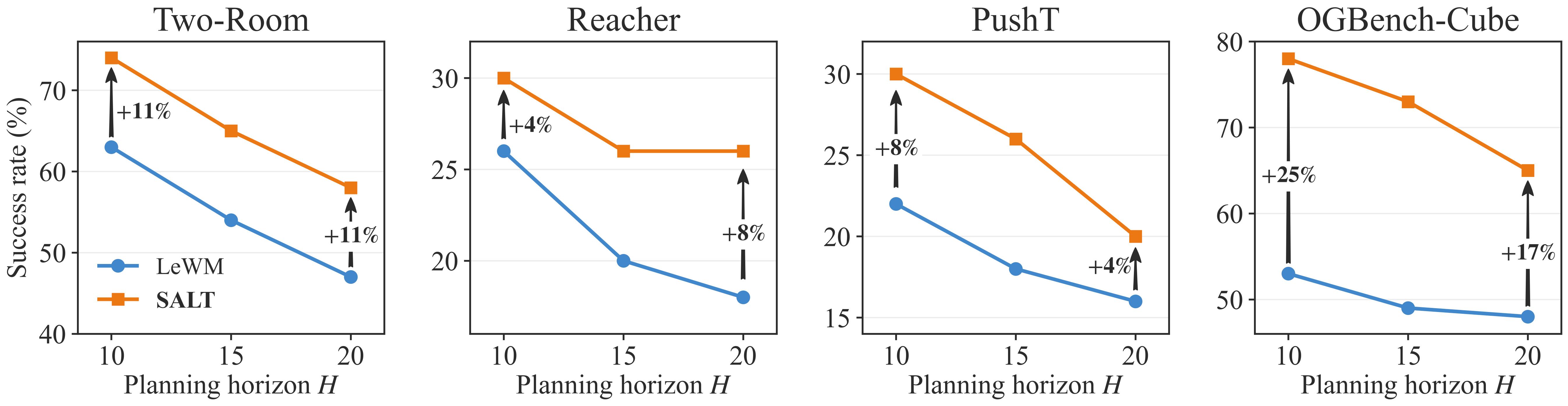}
  \caption{
  \textbf{Long-horizon planning performance.}
  Success rate on the four environments at $H\in\{10,15,20\}$.
  Arrows mark the performance difference between \ours{} and LeWM at $H=10$
  and $H=20$.}
  \label{fig:horizon}
  \vspace{-6pt}
\end{figure*}

\subsection{Planning with Multiple Objects}
\label{sec:cube-double}
We next test whether the planning advantage extends to goals involving multiple objects. Cube-Double \citep{ogbench_park2025} uses the same robot arm and action space as OGBench-Cube but requires two cubes to reach their target poses simultaneously. Both models are trained using the same protocol as for OGBench-Cube, with three independent 
runs and $50$ evaluation episodes per run.

\begin{table}[!htbp]
\centering
\begin{minipage}{0.46\linewidth}
\centering
\small
\setlength\tabcolsep{4pt}
\renewcommand\arraystretch{1.3}
\caption{\textbf{Success rate (\%) on Cube-Double.}}
\label{tab:double}
\begin{tabularx}{\linewidth}{r||Y}
\hline\thickhline
\rowcolor{gray!20}
Method & Success rate \\
\hline\hline
Random & 32.7{\scriptsize$\pm$4.2} \\
LeWM & 36.0{\scriptsize$\pm$5.3} \\
\hline
\rowcolor[HTML]{D7F6FF}
\textbf{\ours{}} & \textbf{54.0}{\scriptsize$\pm$3.5} \\
\hline\thickhline
\end{tabularx}
\end{minipage}
\end{table}

\ours{} achieves a success rate of $54.0\%$, compared with $36.0\%$ for LeWM and $32.7\%$ for the random policy. The $18.0$ percentage point gain over LeWM shows that the planning advantage extends to a task requiring both object goals to be satisfied within the same episode.
\section{Conclusion}
\label{sec:conclusion}


We studied why one-step prediction accuracy can fail to reflect closed-loop
planning quality. Our analysis separates errors introduced at individual
transitions from their subsequent propagation and characterizes state-affine
dynamics by state-independent Jacobians. \ours{} combines this structure with
action conditioning and recursive rollout training. Across four visual planning
environments, it achieves higher mean closed-loop success than reproduced LeWM
despite larger one-step prediction error. The measured propagation operators
exhibit milder amplification over longer rollouts. On OGBench-Cube, \ours{}
reduces failures characterized by sharply higher model-predicted costs after
execution and re-observation.

\FloatBarrier
\bibliography{reference}
\clearpage
\appendix
\appendix

\section{Theoretical Details}
\label{app:theory}

\subsection{Proof of the Multi-Step Error Decomposition}
\label{app:decomposition}

We use the notation of \Cref{sec:error-propagation} and the assumptions of \Cref{prop:error_decomposition}.

\begin{proof}
Define the first-order remainder
\begin{equation}
  r_t := F(z_{t-1}+\delta_{t-1},c_{t-1})
  -F(z_{t-1},c_{t-1})-J_t\delta_{t-1}.
  \label{eq:app-remainder}
\end{equation}
Using $\hat z_{t-1}=z_{t-1}+\delta_{t-1}$ and adding and subtracting $F(z_{t-1},c_{t-1})$, we obtain
\begin{align}
  \delta_t
  &=F(z_{t-1}+\delta_{t-1},c_{t-1})-F(z_{t-1},c_{t-1})
    +\varepsilon_t \notag\\
  &=J_t\delta_{t-1}+r_t+\varepsilon_t.
  \label{eq:app-recursion}
\end{align}
Since $\delta_0=0$, this recursion yields
\begin{equation}
  \delta_H=\sum_{j=1}^{H}\Phi_{j\to H}(\varepsilon_j+r_j),
  \qquad \Phi_{j\to H}=J_HJ_{H-1}\cdots J_{j+1},
  \qquad \Phi_{H\to H}=I.
  \label{eq:app-unroll}
\end{equation}
To verify the expansion, the case $H=1$ follows from $r_1=0$ and $\delta_1=\varepsilon_1$. If \eqref{eq:app-unroll} holds at $H-1$, then
\begin{align}
  \delta_H
  &=J_H\sum_{j=1}^{H-1}\Phi_{j\to H-1}(\varepsilon_j+r_j)
    +(\varepsilon_H+r_H) \notag\\
  &=\sum_{j=1}^{H}\Phi_{j\to H}(\varepsilon_j+r_j),
\end{align}
where $J_H\Phi_{j\to H-1}=\Phi_{j\to H}$ and $\Phi_{H\to H}=I$. This proves \eqref{eq:decomp}.

For the residual bound, assume that $\partial_zF(\cdot,c)$ is $L$-Lipschitz in the state with the same constant $L$ for every action condition $c$. Applying the fundamental theorem of calculus along $z_{t-1}+s\delta_{t-1}$ gives
\begin{equation}
  r_t=\int_0^1
  \big[\partial_zF(z_{t-1}+s\delta_{t-1},c_{t-1})
  -\partial_zF(z_{t-1},c_{t-1})\big]\delta_{t-1}\,ds.
\end{equation}
The Lipschitz assumption bounds the norm of the bracketed term by $Ls\|\delta_{t-1}\|$. Therefore,
\begin{equation}
  \|r_t\|\leq\int_0^1 Ls\|\delta_{t-1}\|^2\,ds
  =\frac{L}{2}\|\delta_{t-1}\|^2.
\end{equation}
\end{proof}

\subsection{Proof of the State-Affine Characterization}
\label{app:affine-characterization}

\begin{proof}[Proof of \Cref{prop:state-affine}]
Fix the action condition $c$ and write $F_c(z)=F(z,c)$.

\emph{(Sufficiency.)} If $F_c(z)=\mathcal A(c)z+\mathcal B(c)$, differentiating with respect to $z$ gives
\[
  \partial_zF_c(z)=\mathcal A(c)
  \qquad\text{for all }z\in\mathbb R^d.
\]
The Jacobian is therefore independent of the state.

\emph{(Necessity.)} Suppose $\partial_zF_c(z)=\mathcal A(c)$ for all $z\in\mathbb R^d$, where $\mathcal A(c)$ does not depend on $z$. Define $G(z)=F_c(z)-\mathcal A(c)z$. Then
\[
  \partial_zG(z)=\partial_zF_c(z)-\mathcal A(c)=0.
\]
For any $z,z'\in\mathbb R^d$, integration along the segment joining them yields
\begin{equation}
  G(z')-G(z)=\int_0^1
  \partial_zG\big(z+s(z'-z)\big)(z'-z)\,ds=0.
\end{equation}
Thus $G$ is constant on $\mathbb R^d$. Denoting this constant by $\mathcal B(c)$ gives $F(z,c)=\mathcal A(c)z+\mathcal B(c)$, completing the proof.
\end{proof}

\section{Implementation Details}
\label{app:detail_setup}

\subsection{Environments and Datasets}
\label{app:environments}

We follow the LeWM benchmark settings \citep{maes_lelidec2026lewm}, including observation and action spaces, success criteria, and training datasets, for Two-Room (2D navigation) \citep{sobal2025stresstesting}, Reacher (two-joint reaching) \citep{tassa2018deepmindcontrolsuite}, PushT (2D pushing) \citep{chi2023diffusionpolicy,zhou2025dinowm}, and OGBench-Cube (3D robot-arm manipulation) \citep{ogbench_park2025}. We split sampled windows into $90\%$ for training and $10\%$ for validation using split seed $3072$. Evaluation initial states and goals are sampled from the offline dataset. Cube-Double \citep{ogbench_park2025}, described in \Cref{sec:cube-double}, follows the OGBench-Cube training and evaluation settings unless stated otherwise.

\subsection{Training Configuration}
\label{app:training}

Unless otherwise stated, \ours{} is trained for $10$ epochs with AdamW, a learning rate of $5\times10^{-5}$, weight decay $10^{-3}$, and an epoch-based linear-warmup cosine-annealing schedule. We use batch size $128$, \texttt{bf16} precision, and gradient clipping at $1.0$. The latent dimension is $d=192$. SIGReg uses weight $\lambda=0.09$, $17$ knots, and $1024$ projections.

The visual encoder is a ViT-Tiny trained from scratch, with $12$ layers, width $192$, $3$ attention heads, feedforward dimension $768$, and patch size $14$. Its CLS-token output is passed through a $192\to2048\to192$ projection head with batch normalization and GELU to produce the regularized latent state $z$. The action encoder maps each action block to dimension $192$; its input size depends on the frameskip and action dimension. Components and settings shared with reproduced LeWM are listed in \Cref{app:baselines}.

\subsection{Transition Model}
\label{app:implementation}

The transition in \eqref{eq:transition} uses $R=16$ dense modulation matrices with no low-rank factorization:
\begin{equation}
  F(z_t,c_t)=
  \left[\mathcal A_0+\sum_{r=1}^{R}(W_gc_t)_rN_r\right]z_t+Bc_t+b,
  \qquad c_t\in\mathbb R^{192}.
  \label{eq:app-transition}
\end{equation}
It acts directly in the regularized latent space, without the predictor projection head used by LeWM. We parameterize the shared base matrix as
\begin{equation}
  \mathcal A_0=U\operatorname{diag}(\tanh s)V^\top,
  \qquad U=\exp(W_u-W_u^\top),
  \qquad V=\exp(W_v-W_v^\top).
\end{equation}
The skew-symmetric generators make $U$ and $V$ orthogonal, so the singular values of $\mathcal A_0$ are $|\tanh s_i|<1$ for finite parameters. This constrains the base matrix, not the full action-conditioned matrix $\mathcal A(c)$.

\Cref{tab:params} lists all stored parameters and their initialization. These values give $U=V=I$, $\mathcal A_0\approx0.9951I$, and an initial prediction $F(z_t,c_t)\approx0.9951z_t$. The gate $W_g$ has no bias.

\begin{table}[H]
\centering
\caption{\textbf{Parameters of the \ours{} transition model} ($d=192$, $R=16$). Counts are stored parameters. Only the skew-symmetric parts of $W_u$ and $W_v$ affect the transition, corresponding to $18{,}336$ independent entries for each matrix.}
\label{tab:params}
\begin{tabularx}{\textwidth}{l l c c >{\centering\arraybackslash}X}
\toprule
\textbf{Component} & \textbf{Symbol} & \textbf{Shape} & \textbf{Parameters} & \textbf{Initialization} \\
\midrule
Generator of $U$ & $W_u$ & $192\times192$ & $36{,}864$ & $0$ \\
Generator of $V$ & $W_v$ & $192\times192$ & $36{,}864$ & $0$ \\
Singular-value parameters & $s$ & $192$ & $192$ & $3.0$ \\
Gate & $W_g$ & $16\times192$ & $3{,}072$ & $\mathcal{U}(-1/\sqrt{192},\,1/\sqrt{192})$ \\
Modulation modes & $N_1,\dots,N_{16}$ & $16\times192\times192$ & $589{,}824$ & $0$ \\
Action injection & $B$ & $192\times192$ & $36{,}864$ & $0$ \\
Bias & $b$ & $192$ & $192$ & $0$ \\
\midrule
\textbf{Total} & & & $\mathbf{703{,}872}$ & \\
\bottomrule
\end{tabularx}
\end{table}

\subsection{Training Procedure}
\label{app:training-procedure}

Algorithm~\ref{alg:salt} summarizes the recursive training procedure. A window contains six observations sampled every five environment steps and therefore spans $25$ environment steps. Each intervening action block concatenates five raw actions. The first latent initializes the rollout, and all $K=5$ predictions receive equal supervision. SIGReg uses all six encoded frames, and target latents are not detached.
\begin{algorithm}[!htbp]
\caption{Mini-batch training objective for \ours{}.}
\label{alg:salt}
\begin{lstlisting}[style=salt,basicstyle=\ttfamily\footnotesize]
def affine_step(z, c):
    # z: (B, D), c: (B, D)
    U = expm(W_u - W_u.T)
    V = expm(W_v - W_v.T)

    # Shared base transition: sigma_max(A0) < 1
    A0 = U @ diag(tanh(s)) @ V.T

    # Action-conditioned transition
    g   = c @ W_g.T
    A_c = A0 + einsum('br,rij->bij', g, N)

    return einsum('bij,bj->bi', A_c, z) \
           + c @ B.T + b


def loss(obs, act, lam=0.09):
    # obs: (B, K+1, C, H, W)
    # act: (B, K, ...), action blocks

    # Encode all frames for initialization, targets, and SIGReg
    z = encoder(obs)              # (B, K+1, D)
    c = action_encoder(act)       # (B, K, D)

    # Recursive rollout initialized from the first true latent
    z_cur = z[:, 0]
    preds = []

    for k in range(K):
        z_cur = affine_step(z_cur, c[:, k])
        preds.append(z_cur)

    z_hat = stack(preds, dim=1)    # (B, K, D)

    # Targets are not detached
    loss_roll = mse(z_hat, z[:, 1:])
    loss_reg  = sigreg(z)

    return loss_roll + lam * loss_reg
\end{lstlisting}
\end{algorithm}

\subsection{Planner and Evaluation Protocol}
\label{app:planner}

We use the planning objective and default evaluation protocol in \Cref{sec:problem-setup,sec:setup}. CEM uses $300$ candidate action sequences per iteration, $30$ optimization iterations, $30$ elites, and initial sampling variance $1.0$. Each model action spans five environment steps, so the default horizon and replanning interval $H=5$ correspond to $25$ environment steps.

The evaluation interface retains LeWM's three-frame history window; \ours{} uses only the most recent latent state. Each evaluation seed specifies $50$ episodes. Matched seeds give paired models identical initial states and goals. The random policy is averaged over three runs.

\subsection{Efficiency Measurement}
\label{app:efficiency}

All timings in \Cref{tab:efficiency} are measured consecutively on the same NVIDIA H100 $80$\,GB GPU. Single-step timing uses a batch of $300$ latent states and covers the predictor and its projection head where present. Planning time measures one steady-state batched CEM solve; the reported summary is the unweighted mean over the four environments and $H\in\{5,10,15,20\}$. Training throughput is measured separately for the one-step and rollout configurations.

\subsection{Baseline Provenance}
\label{app:baselines}

Random, reproduced LeWM, and \ours{} are evaluated in our pipeline. LeWM and \ours{} are trained in our implementation with matched visual encoders, encoder projection heads, action encoders, latent dimensions, optimizer settings, batch sizes, and CEM planners. LeWM retains its Transformer predictor, predictor projection head, one-step prediction objective, and SIGReg. All paired comparisons use this reproduction.

The PLDM \citep{sobal2026learning}, DINO-WM \citep{zhou2025dinowm}, Sub-JEPA \citep{zhao2026subjepa}, SD-JEPA \citep{thil2026sdjepa}, and SMWM \citep{ivashkov2026sensorimotor} rows in \Cref{tab:main} are taken from their respective papers. Their training, hyperparameter selection, and reporting protocols differ, so these rows provide context rather than controlled comparisons.

\FloatBarrier
\section{Empirical Validation of Error Propagation}
\label{app:error-validation}

\subsection{Measurement Protocol}
\label{app:decomp-protocol}

The PushT diagnostics in \Cref{app:decomp-recon,app:operator-growth,app:spectral-control} use samples drawn from
the training dataset with a fixed generator seed. Rollouts start from an
encoded latent state, so $\delta_0=0$. Jacobians and matrix products are
computed in float64.

We follow the notation of Section~\ref{sec:error-propagation}. For \ours{}, the
propagation operator is constructed directly from the learned transition
matrices $\mathcal{A}(c)$. For reproduced LeWM, the predictor is conditioned on a
three-frame history window. We differentiate only with respect to the most
recent latent-state channel while holding the earlier history frames fixed.
The resulting $d\times d$ Jacobian therefore does not represent the complete
augmented-state Jacobian of the history-based predictor.

\subsection{Rollout Reconstruction}
\label{app:decomp-recon}

We reconstruct the terminal rollout error using the first-order propagation terms
\begin{equation}
  \tilde\delta_H=\sum_{j=1}^{H}\Phi_{j\to H}\,\varepsilon_j,
  \qquad
  \rho_H=\frac{\|\delta_H-\tilde\delta_H\|}{\|\delta_H\|}.
\end{equation}

For \ours{}, the state-affine transition satisfies the single-state decomposition
derived in Section~\ref{sec:state-affine}, so $\rho_H=0$ in exact arithmetic;
nonzero values therefore reflect finite numerical precision.

For reproduced LeWM, only the current-state channel is linearized while the
earlier history channels are held fixed. Consequently,
$\rho_H$ may contain both nonlinear current-state effects and contributions
propagated through the history channels. It should therefore not be
interpreted as the nonlinear Taylor remainder alone.

Across $H\in\{1,\ldots,20\}$, the \ours{} discrepancy remains at the
$10^{-6}$ level. The corresponding discrepancy for reproduced LeWM increases
with rollout horizon and reaches 49.4\% at $H=20$.

\subsection{Propagation-Operator Growth}
\label{app:operator-growth}

We measure the spectral norm of the propagation operator from the initial state to rollout step $k$ on PushT:
\begin{equation}
  \Phi_{0\rightarrow k}=J_kJ_{k-1}\cdots J_1.
\end{equation}
Measurements are aggregated over nine window--batch combinations obtained from three trajectory-window lengths $\{20,25,30\}$ and three batches. Each \ours{} batch contains 40 trajectories. Each LeWM batch contains 16 trajectories because Jacobian computation is substantially more expensive. The reported center is the geometric mean, and the uncertainty band spans the minimum and maximum across the nine measurements.

Between rollout steps $1$ and $20$, the geometric-mean norm changes from
$2.63$ to $11.26$ for \ours{} and from $2.46$ to $21.51$ for reproduced LeWM. These endpoint values underlie the growth factors reported in
Section~\ref{sec:better-planner}.

The spectral norm is a worst-case amplification measure. Step-wise operator
norms should therefore not be multiplied to estimate the realized
amplification of an individual prediction error.

\subsection{Control for the Spectral Constraint}
\label{app:spectral-control}

To test whether the constraint $\sigma_{\max}(\mathcal{A}_0)<1$ is responsible for the observed propagation behavior, we train an otherwise identical state-affine model in which $\mathcal{A}_0$ is represented directly as an unconstrained dense matrix initialized to $0.9951I$. All remaining parameters and training settings are unchanged.
We then repeat the diagnostic in \Cref{app:operator-growth}.

Removing the constraint changes the measured twenty-step propagation growth
from $4.28\times$ to $4.42\times$, compared with $8.74\times$ for reproduced
LeWM. 

These measurements do not imply that \ours{} rollouts are globally contractive.

\subsection{Decoded Open-Loop Rollouts}
\label{app:decoded-rollouts}

For each environment--checkpoint pair, we freeze the world model and train a post-hoc decoder $D:\mathbb R^{192}\to\mathbb R^{3\times224\times224}$ on encoded ground-truth observations and their corresponding images. It uses $196$ learnable query tokens on a $14\times14$ grid, four cross-attention blocks with width $512$ and eight heads, and a linear output projection to $16\times16$ RGB patches. Training uses pixel MSE for $30{,}000$ steps with the world model's image preprocessing. The decoder is used only for visualization, never for world-model training, planning, or performance measurement.

\Cref{fig:decoded-rollout} uses the \ours{} checkpoint for each environment and the corresponding dataset actions. The displayed predictions retain recognizable agent, arm, or object configurations across the rollout. They illustrate the observation-space content of predicted latents; quantitative conclusions rely on aggregate measurements rather than these examples.

\begin{figure}[!htbp]
  \centering
  \includegraphics[width=\textwidth]{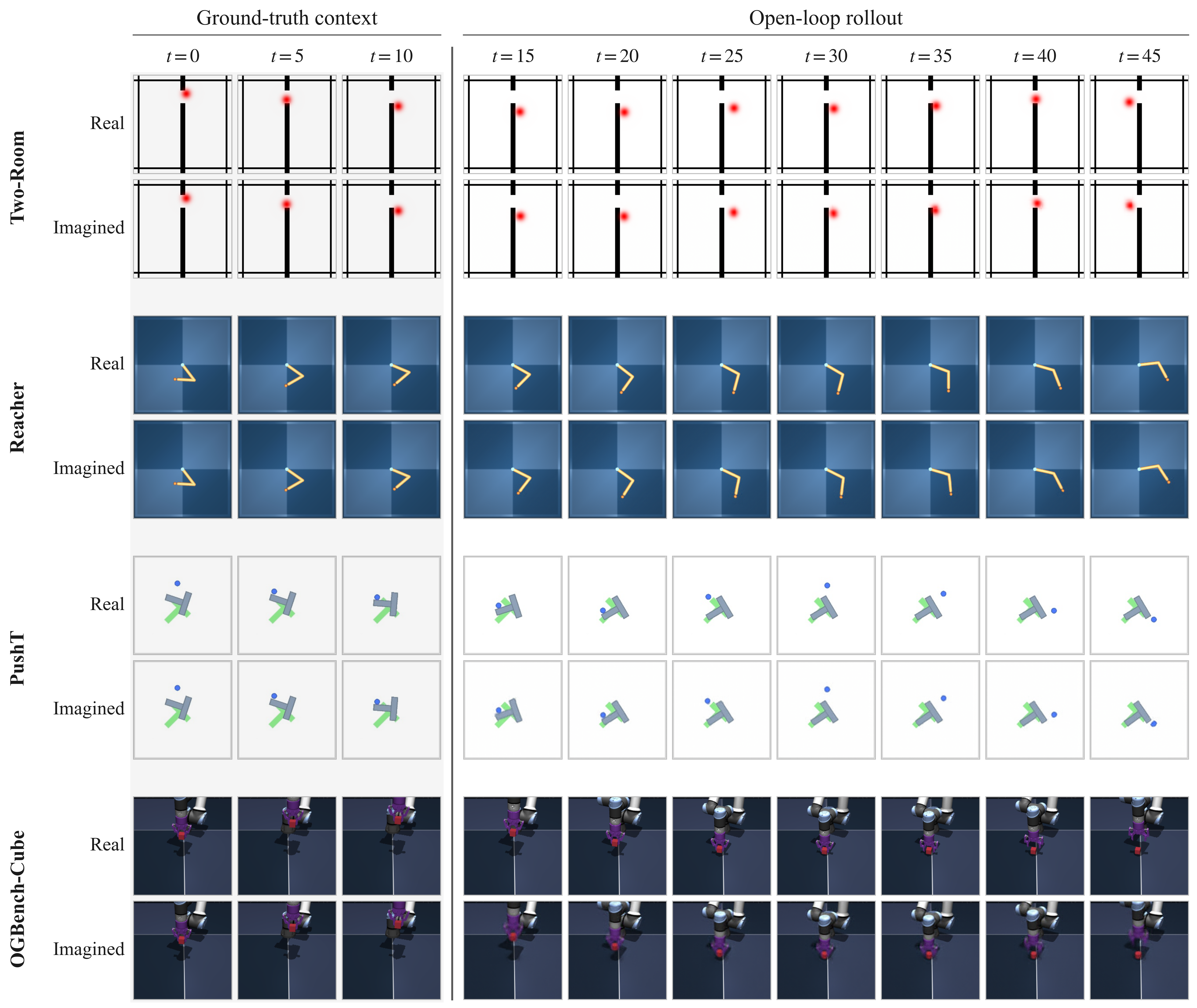}
  \caption{\textbf{Decoded open-loop rollouts across four environments.}
  The first three columns ($t=0,5,10$) are ground-truth context frames from the common evaluation interface. Only the final frame initializes \ours{}, which predicts seven subsequent latent states using dataset actions. At frameskip $5$, these predictions cover $35$ additional environment steps ($t=15,\ldots,45$). Images are produced by a post-hoc decoder trained only on ground-truth latents and excluded from world-model training and planning.}
  \label{fig:decoded-rollout}
\end{figure}
\FloatBarrier

\subsection{Rollout Error and Propagation Across Four Environments}
\label{app:four-env-propagation}

We additionally evaluate rollout error and propagation-operator norms on
Two-Room, Reacher, PushT, and OGBench-Cube using test seeds $1$, $2$, and $3$.
These measurements complement the nine-measurement PushT summary in
\Cref{app:operator-growth}.
As shown in \Cref{fig:core-4env}, \ours{} has larger initial prediction error
in all four environments, while its measured propagation norms are generally
lower at intermediate and later rollout steps.
On Reacher, the \ours{} norm decreases after an initial rise. On PushT, it
continues to grow but more slowly than that of LeWM.

Smaller propagation norms do not imply smaller rollout errors at every
horizon. On OGBench-Cube, the rollout errors remain close over much of the
horizon, and \ours{} does not have lower error at the final steps shown.
The four-environment results therefore broaden the propagation diagnostic
beyond PushT while preserving the distinction between error amplification
and realized rollout error.

\begin{figure}[!htbp]
  \centering
  \includegraphics[width=\textwidth]{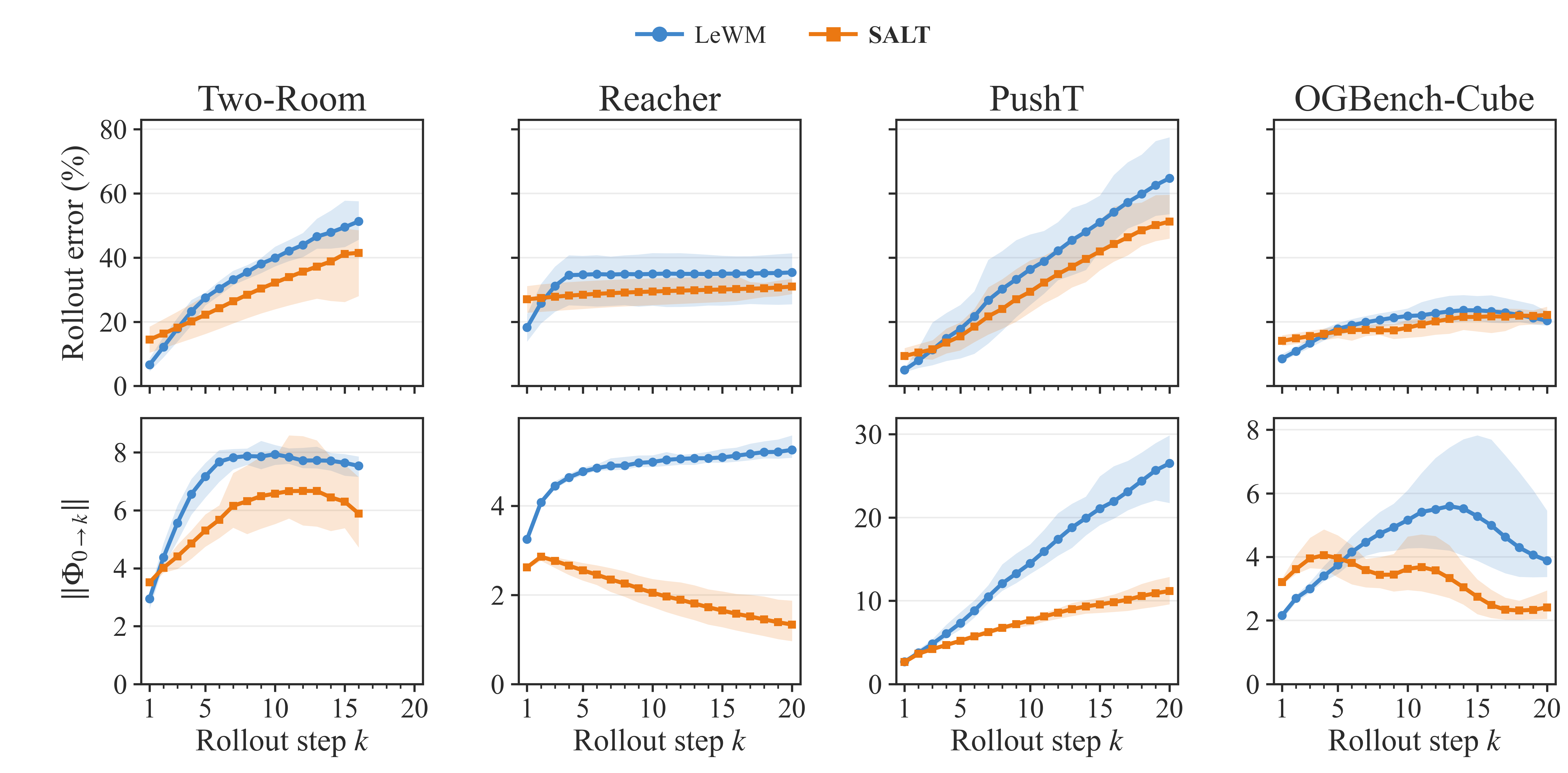}
  \caption{\textbf{Rollout error and propagation across four environments.}
  Top: relative rollout error as a function of rollout step $k$.
  Bottom: the spectral norm of the measured propagation operator
  $\Phi_{0\to k}$.
  Results use test seeds $1$, $2$, and $3$.
  Blue circles denote reproduced LeWM; orange squares denote \ours{}.
  Each model step corresponds to five environment steps.}
  \label{fig:core-4env}
\end{figure}
\FloatBarrier

\section{Failure Anatomy on OGBench-Cube}
\label{app:anatomy}

\subsection{Paired Evaluation and Failure Criterion}
\label{app:anatomy-setup}

This diagnostic is separate from the aggregate evaluation in \Cref{tab:main}. Under the default protocol in \Cref{app:planner}, we evaluate seeds $1$--$3$ with $50$ identical episodes per seed for both models, giving $150$ paired episodes.

For each failed episode, $\hat c$ is the mean elite cost at the final CEM iteration of the first planning call. Following \Cref{sec:failure-anatomy}, we denote by $c$ the mean elite cost at the first CEM iteration of the second call, after executing the first sequence and encoding the resulting observation. Thus $c$ is a post-execution replanning cost; both quantities are computed in the same model's latent space at different planning calls and optimization stages. Their ratio $r=c/\hat c$ is not a direct comparison with a physical ground-truth cost.

We select $r=5$ within the gap in LeWM's seed-$1$ ratio distribution and keep it fixed for both models and the remaining seeds. Failed episodes with $r>5$ are \textit{mismatch} failures; those with $r\leq5$ are \textit{unflagged} failures.

\subsection{Failure Composition}
\label{app:anatomy-results}

\Cref{tab:anatomy-rate} partitions the $52$ LeWM failures and $9$ \ours{} failures. Rates use all $150$ episodes per model, not only failed episodes.

\begin{table}[H]
\centering
\caption{\textbf{Episode-level failure rates by type on OGBench-Cube.}
Rates are computed over $150$ paired episodes per model using the fixed threshold $r=5$. LeWM denotes the reproduced baseline.}
\label{tab:anatomy-rate}
\begin{tabular}{lcc}
\toprule
\textbf{Model} & \textbf{Mismatch} & \textbf{Unflagged} \\
\midrule
LeWM & $35/150$ ($23.3\%$) & $17/150$ ($11.3\%$) \\
\ours{} & $3/150$ ($2.0\%$)  & $6/150$ ($4.0\%$) \\
\bottomrule
\end{tabular}
\end{table}

Across the paired episodes, both models succeed on $98$, only \ours{} succeeds on $43$, only LeWM succeeds on none, and neither succeeds on $9$. These counts give success rates of $94.0\%$ for \ours{} and $65.3\%$ for LeWM in this evaluation, as reported in \Cref{sec:failure-anatomy}.

\subsection{Figure Details}
\label{app:anatomy-figure}

In \Cref{fig:failure}, the start and goal images are dataset observations; the post-execution image is recorded after the first planned sequence. All are real observations. Both representative examples are failures of reproduced LeWM from evaluation seed $1$, with ratios near the seed-$1$ median of their respective failure types.

\section{Long-Horizon Planning}
\label{app:long-horizon}

\subsection{Evaluation Protocol}
\label{app:long-horizon-protocol}

We use the stress-test protocol in \Cref{sec:long-horizon}: a target offset of $100$ environment steps, a $300$-step interaction budget, and planning horizons and replanning intervals $H\in\{10,15,20\}$. With frameskip $5$, the full-sequence execution intervals are $50$, $75$, and $100$ environment steps. Other settings follow \Cref{app:planner}; LeWM denotes the reproduced baseline.

\subsection{Results Across Four Environments}
\label{app:long-horizon-results}
\Cref{tab:horizon-main} lists the success rates underlying \Cref{fig:horizon}. \ours{} exceeds LeWM in all twelve environment–horizon combinations, with gaps between $4$ and $25$ percentage points.

\begin{table}[H]
\centering
\caption{\textbf{Success rate (\%) under the long-horizon protocol.} Results correspond to the long-horizon experiment reported in the main text.}
\label{tab:horizon-main}
\begin{tabular}{lcccc}
\toprule
\textbf{Environment} & \textbf{Horizon} & \textbf{LeWM} & \textbf{\ours{}} & \textbf{Gap (pp)} \\
\midrule
Two-Room
& $10$ & $63$ & $74$ & $\mathbf{+11}$ \\
& $15$ & $54$ & $65$ & $\mathbf{+11}$ \\
& $20$ & $47$ & $58$ & $\mathbf{+11}$ \\
\midrule
Reacher
& $10$ & $26$ & $30$ & $\mathbf{+4}$ \\
& $15$ & $20$ & $26$ & $\mathbf{+6}$ \\
& $20$ & $18$ & $26$ & $\mathbf{+8}$ \\
\midrule
PushT
& $10$ & $22$ & $30$ & $\mathbf{+8}$ \\
& $15$ & $18$ & $26$ & $\mathbf{+8}$ \\
& $20$ & $16$ & $20$ & $\mathbf{+4}$ \\
\midrule
OGBench-Cube
& $10$ & $53$ & $78$ & $\mathbf{+25}$ \\
& $15$ & $49$ & $73$ & $\mathbf{+24}$ \\
& $20$ & $48$ & $65$ & $\mathbf{+17}$ \\
\bottomrule
\end{tabular}
\end{table}

\section{Ablations and Robustness}
\label{app:ablations}

Each ablation compares variants within the same evaluation round. Absolute values across different ablation groups should therefore not be interpreted as directly comparable.

\subsection{Action-Conditioned Transition}
\label{app:action-ablation}

We replace $\mathcal A(c)$ with the shared base matrix $\mathcal A_0$ while retaining $Bc+b$. Actions therefore remain inputs but no longer modulate the state-transition matrix.

\begin{table}[H]
\centering
\caption{\textbf{Effect of removing the action-dependent modulation of the transition matrix.} Success rate (\%) on Reacher over three evaluation seeds.}
\label{tab:action-modulation}
\begin{tabular}{lc}
\toprule
\textbf{Configuration} & \textbf{Planning success} \\
\midrule
Action-independent transition matrix & $11.3 \pm 3.1$ \\
Full structure                       & $90.7 \pm 3.1$ \\
\bottomrule
\end{tabular}
\end{table}

Removing action-dependent modulation substantially reduces success on Reacher; the additive action term alone does not match the full model in this setting.

\begingroup
\setlength{\parskip}{2pt plus 1pt minus 1pt}
\captionsetup[figure]{font=small}
\Needspace{0.5\textheight}
\subsection{Robustness}
\label{app:robustness}

\subsubsection{Regularizer Weight}
\label{app:lambda}

We evaluate \ours{} across SIGReg weights on PushT. As shown in \Cref{fig:lambda}, performance remains stable near the default $\lambda=0.09$ and decreases toward the tested extremes.

\begin{figure}[H]
\centering
\includegraphics[width=0.36\linewidth]{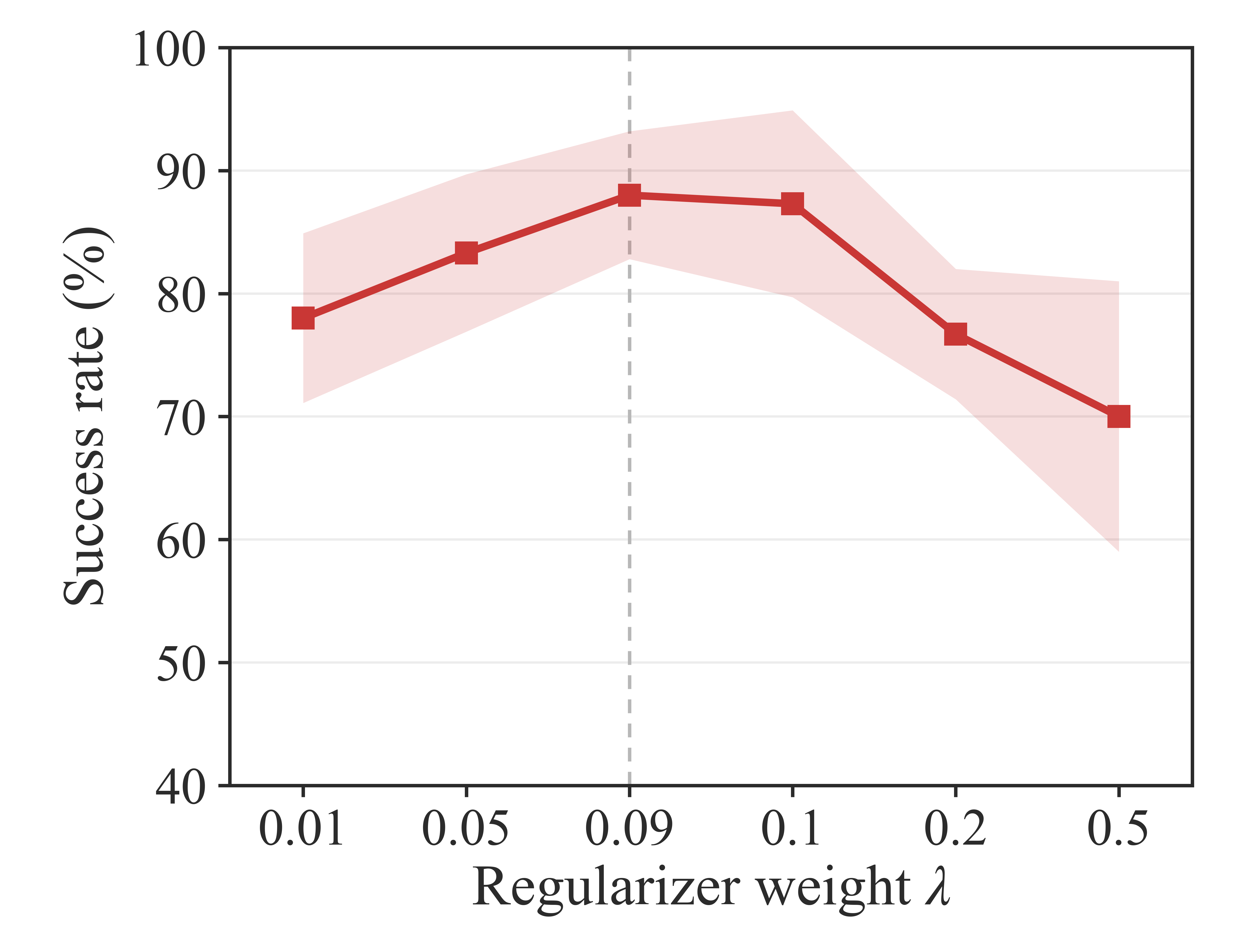}
\caption{\textbf{Success rate against the SIGReg weight on PushT.} Shading shows standard deviation across three evaluation seeds; the dashed line marks the default $\lambda=0.09$.}
\label{fig:lambda}
\end{figure}
\FloatBarrier

\subsubsection{Latent Dimensionality}
\label{app:latent-dim}

We keep the visual backbone fixed and adjust the projection-head output, action-condition dimension, and transition parameters to match $d$. As shown in \Cref{fig:latent-dim}, success varies by at most approximately three percentage points in each environment over the tested fourfold range.

\begin{figure}[H]
\centering
\includegraphics[width=0.36\linewidth]{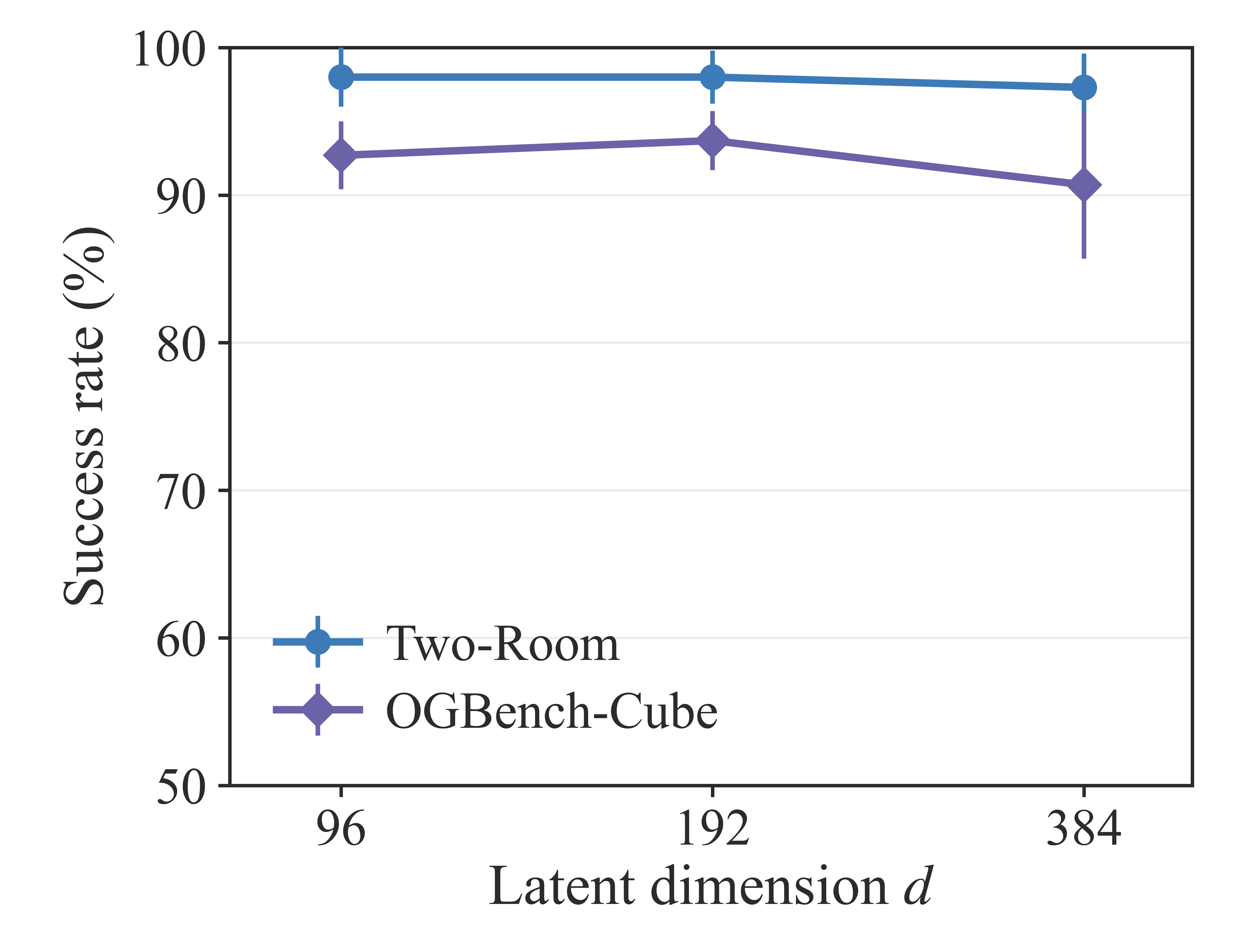}
\caption{\textbf{Success rate of \ours{} against latent dimension on Two-Room and OGBench-Cube.} The horizontal axis uses a base-two log scale; error bars show standard deviation across three evaluation seeds.}
\label{fig:latent-dim}
\end{figure}
\FloatBarrier

\section{Limitations and Scope}
\label{app:limitations}

\paragraph{Scope of the model family.}
We study JEPA-style world models that plan through recursive latent prediction. Architectures with different prediction and control interfaces, including vision-language-action models, are outside this scope.

\paragraph{Scope of the analysis.}
Our theoretical and empirical analyses address recursive error propagation, one aspect of latent planning; they do not fully explain all factors determining planning performance.

\endgroup

\end{document}